\documentclass[a4paper]{article}
\usepackage{times}
\usepackage{microtype}
\usepackage{graphicx}
\usepackage{subcaption}
\usepackage{fullpage}
\usepackage{booktabs} 
\usepackage[most]{tcolorbox}

\usepackage{hyperref}

\usepackage{amsmath}
\usepackage{amssymb}
\usepackage{mathtools}
\usepackage{amsthm}

\usepackage{thmtools}
\usepackage{thm-restate}
\usepackage[capitalize,noabbrev]{cleveref}

\theoremstyle{plain}
\newtheorem{theorem}{Theorem}[section]

\newtheorem{lemma}[theorem]{Lemma}

\theoremstyle{definition}

\newtheorem{assumption}[theorem]{Assumption}
\theoremstyle{remark}

\usepackage{enumitem}

\newcommand*\samethanks[1][\value{footnote}]{\footnotemark[#1]}

\title{Interaction-Grounded Learning for Contextual Markov Decision Processes with Personalized Feedback}
\author{
    Mengxiao Zhang\thanks{Equal contribution.}\\
    University of Iowa\\
    \texttt{mengxiao-zhang@uiowa.edu}
    \and
    Yuheng Zhang\samethanks{}\\
    University of Illinois Urbana-Champaign \\
    \texttt{yuhengz2@illinois.edu}
    \and 
\hspace*{-12pt} Haipeng Luo\\
\hspace*{-12pt} University of Southern California\\
\hspace*{-12pt}\texttt{haipengl@usc.edu}
\and 
\hspace*{20pt} Paul Mineiro\\
\hspace*{20pt} Microsoft Research\\
\hspace*{20pt} \texttt{pmineiro@microsoft.com}
}

\newif\ifspacehack
\usepackage{natbib}
\hypersetup{
    colorlinks = blue,
    breaklinks,
    linkcolor = blue,
    citecolor = blue,
    urlcolor  = blue,
}
\usepackage{url} 
\usepackage{graphicx}
\usepackage{mathtools}
\usepackage{footnote}
\usepackage{float}
\usepackage{amsthm}
\usepackage{xspace}
\usepackage{multirow}
\usepackage{xcolor}
\usepackage{wrapfig}
\usepackage{framed}
\usepackage{bbm}
\usepackage{footnote}
\usepackage{nicefrac}
\usepackage{makecell}
\usepackage[vlined, algo2e]{algorithm2e} 
\usepackage{algorithm}
\usepackage{amssymb}
\usepackage{bm}
\makesavenoteenv{tabular}
\makesavenoteenv{table}

\renewcommand{\tilde}{\widetilde}

\def \Pr {\mathsf{Pr}}

\newcommand{\eps}{\epsilon}

\newcommand{\calA}{{\mathcal{A}}}

\newcommand{\calB}{{\mathcal{B}}}
\newcommand{\calX}{{\mathcal{X}}}
\newcommand{\calS}{{\mathcal{S}}}
\newcommand{\calF}{{\mathcal{F}}}

\newcommand{\calH}{{\mathcal{H}}}
\newcommand{\calD}{{\mathcal{D}}}
\newcommand{\calE}{{\mathcal{E}}}

\newcommand{\calT}{{\mathcal{T}}}
\newcommand{\calP}{{\mathcal{P}}}

\newcommand{\calY}{{\mathcal{Y}}}

\newcommand{\Reg}{\mathrm{\mathbf{Reg}}}

\newcommand{\euler}{\textsc{Euler}}

\newcommand{\E}{{\mathbb{E}}}

\newcommand{\order}{\mathcal{O}}

\newcommand{\Acal}{\mathcal{A}}

\newcommand{\Scal}{\mathcal{S}}

\newcommand{\unif}{\text{\rm unif}}

\DeclareMathOperator*{\argmin}{argmin}
\DeclareMathOperator*{\argmax}{argmax}

\newcommand{\wh}{\widehat}
\newcommand{\wt}{\widetilde}

\newcommand{\AlgSq}{\ensuremath{\mathsf{AlgSq}}\xspace}

\newcommand{\otil}{\ensuremath{\tilde{\mathcal{O}}}}

\usepackage{lipsum,booktabs}
\usepackage{amsmath,mathrsfs,amssymb,amsfonts,bm,enumitem}
\usepackage{rotating}
\usepackage{pdflscape}
\usepackage{hyperref,url}
\hypersetup{
    colorlinks,
    breaklinks,
    linkcolor = blue,
    citecolor = blue,
    urlcolor  = blue,
}
\allowdisplaybreaks
\usepackage{appendix}
\usepackage{multirow,makecell}

\usepackage{algorithmic,algorithm}
\renewcommand{\algorithmicrequire}{ \textbf{Input:}}
\renewcommand{\algorithmicensure}{ \textbf{Output:}}

\renewcommand{\tilde}{\widetilde}

\def \E {\mathbb{E}}

\def \Pr {\mathsf{Prob}}

\newcommand{\RegSq}{\ensuremath{\mathrm{\mathbf{Reg}}_{\mathsf{Sq}}}\xspace}

\usepackage{mathtools}

\usepackage{graphicx,color}

\definecolor{wine_red}{RGB}{228,48,64}
\definecolor{DSgray}{cmyk}{0,1,0,0}

\usepackage{prettyref}
\newcommand{\pref}[1]{\prettyref{#1}}

\newcommand{\savehyperref}[2]{\texorpdfstring{\hyperref[#1]{#2}}{#2}}
\newrefformat{eq}{\savehyperref{#1}{Eq. \textup{(\ref*{#1})}}}
\newrefformat{eqn}{\savehyperref{#1}{Eq.~(\ref*{#1})}}
\newrefformat{lem}{\savehyperref{#1}{Lemma~\ref*{#1}}}
\newrefformat{def}{\savehyperref{#1}{Definition~\ref*{#1}}}
\newrefformat{line}{\savehyperref{#1}{Line~\ref*{#1}}}
\newrefformat{thm}{\savehyperref{#1}{Theorem~\ref*{#1}}}
\newrefformat{corr}{\savehyperref{#1}{Corollary~\ref*{#1}}}
\newrefformat{cor}{\savehyperref{#1}{Corollary~\ref*{#1}}}
\newrefformat{sec}{\savehyperref{#1}{Section~\ref*{#1}}}
\newrefformat{app}{\savehyperref{#1}{Appendix~\ref*{#1}}}
\newrefformat{assum}{\savehyperref{#1}{Assumption~\ref*{#1}}}
\newrefformat{asm}{\savehyperref{#1}{Assumption~\ref*{#1}}}
\newrefformat{ex}{\savehyperref{#1}{Example~\ref*{#1}}}
\newrefformat{fig}{\savehyperref{#1}{Figure~\ref*{#1}}}
\newrefformat{alg}{\savehyperref{#1}{Algorithm~\ref*{#1}}}
\newrefformat{rem}{\savehyperref{#1}{Remark~\ref*{#1}}}
\newrefformat{conj}{\savehyperref{#1}{Conjecture~\ref*{#1}}}
\newrefformat{prop}{\savehyperref{#1}{Proposition~\ref*{#1}}}
\newrefformat{proto}{\savehyperref{#1}{Protocol~\ref*{#1}}}
\newrefformat{prob}{\savehyperref{#1}{Problem~\ref*{#1}}}
\newrefformat{claim}{\savehyperref{#1}{Claim~\ref*{#1}}}
\newrefformat{que}{\savehyperref{#1}{Question~\ref*{#1}}}
\newrefformat{op}{\savehyperref{#1}{Open Problem~\ref*{#1}}}
\newrefformat{fn}{\savehyperref{#1}{Footnote~\ref*{#1}}}

\def \epsilon {\varepsilon}

\newtcolorbox{promptbox}[1][]{
  colback=blue!5!white, colframe=blue!75!black,
  fonttitle=\bfseries, title=Prompt,
  left=2mm, right=2mm, top=2mm, bottom=2mm,
  boxrule=0.5mm,  
  coltitle=black, 
  colbacktitle=blue!15!white, 
  breakable,      
  #1
}

\begin{document}
\maketitle

\begin{abstract}

In this paper, we study Interaction-Grounded Learning (IGL)~\citep{xie2021interaction}, a paradigm designed for realistic scenarios where the learner receives indirect feedback generated by an unknown mechanism, rather than explicit numerical rewards. While prior work on IGL provides efficient algorithms with provable guarantees, those results are confined to single-step settings, restricting their applicability to modern sequential decision-making systems such as multi-turn Large Language Model (LLM) deployments. 
To bridge this gap, we propose a computationally efficient algorithm that achieves a sublinear regret guarantee for contextual episodic Markov Decision Processes (MDPs) with personalized feedback. Technically, we extend the reward-estimator construction of \citet{zhang2024efficient} from the single-step to the multi-step setting, addressing the unique challenges of decoding latent rewards under MDPs. Building on this estimator, we design an Inverse-Gap-Weighting (IGW) algorithm for policy optimization. Finally, we demonstrate the effectiveness of our method in learning personalized objectives from multi-turn interactions through experiments on both a synthetic episodic MDP and a real-world user booking dataset.
\end{abstract}
\section{Introduction}\label{sec: intro}

While classic formulations of interactive learning assume access to direct numerical rewards,
many modern interactive systems rarely receive such clean supervision.
Instead, they must rely on indirect, user-facing feedback, such as thumbs-up signals in social media~\citep{neophytou2022revisiting}, short textual comments on review platforms~\citep{hasan2025based}, electroencephalogram (EEG) signals in brain-computer interfaces~\citep{akinola2020accelerated}, or follow-up questions in conversational search that implicitly reveal satisfaction or dissatisfaction~\citep{kim2024followup,lin2024interpretable}. 
This phenomenon is especially salient for Large Language Models (LLMs) deployed in conversational settings: the system produces a sequence of intermediate outputs over multiple turns, yet users typically evaluate the interaction holistically based on the final response quality and its alignment with their personal preferences.

Interaction-Grounded Learning (IGL), first proposed by~\citet{xie2021interaction}, offers a principled framework to model learning from such indirect feedback.
Rather than receiving a direct reward as the feedback, IGL posits an underlying latent reward and views the observed feedback as a noisy proxy generated by an unknown mechanism.
To make learning tractable, earlier works imposed many structural assumptions on the relationship between the feedback and the latent reward, such as requiring feedback to be conditionally independent of the context given the latent reward~\citep{xie2021interaction, xie2022interaction}.
While these assumptions enable provable guarantees, they are often too restrictive for practical settings.
For instance, different users may express feedback in systematically different ways for the same outcome, or a single user may react differently depending on the conversational context.
This highlights a critical need for a context-dependent grounding mechanism, rather than a universal mapping.
Motivated by this, recent work has generalized the IGL framework to incorporate context-dependent feedback structures~\citep{maghakian2023personalized,zhang2024provably}.

Despite this progress, to the best of our knowledge, all existing IGL literature is restricted to the single-step contextual bandit scenario.
This abstraction misses the defining feature of many contemporary applications: interactions are inherently sequential and multi-step.
In an LLM-based system, for example, an agent may ask clarifying questions, invoke tools, or decide how much evidence to gather—decisions that fundamentally shape the trajectory of the conversation—before receiving any feedback.
This combination of multi-step sequential decision-making and indirect feedback creates a fundamental gap between current IGL theory and the needs of modern systems.
This motivates our central research question:
\begin{center}
\it Can we design provably efficient algorithms for interaction-grounded learning in multi-step contextual Markov Decision Processes (MDPs) with personalized feedback?
\end{center}

\subsection{Our Contribution}\label{sec: contribution}
In this work, we answer this question affirmatively.
We propose a computationally efficient algorithm for IGL in contextual episodic MDPs with personalized feedback and establish a sublinear regret guarantee.
At a high level, our approach decomposes this problem into \emph{reward estimator learning} and \emph{policy learning}.
In \pref{sec: decoder}, we develop a novel procedure for constructing a reward estimator from indirect feedback, comprising three main components:
(i) \emph{Reachable state identification}, which isolates terminal states that are reliably reachable by specific policies to ensure sufficient feedback data collection;
(ii) \emph{Inverse kinematic learning}, which estimates the posterior distribution of actions under a uniform policy conditioned on the reachable states identified in the previous step; and
(iii) \emph{A Lipschitz reward estimator}, which provides a smooth approximation of the latent reward function by leveraging the learned inverse kinematic at each reachable state.
This estimator non-trivially generalizes the contextual bandit construction of \citet{zhang2024provably} to the multi-step setting.
Building upon this estimator, in \pref{sec: regret}, we introduce a policy learning algorithm based on Inverse-Gap Weighting (IGW) that achieves $\widetilde{\mathcal O}(T^{3/4})$ regret over $T$ episodes.
Finally, in \pref{sec: experiment}, we evaluate our approach on both a synthetic episodic MDP benchmark and a real user booking dataset.
The empirical results demonstrate that our method effectively learns personalized objectives from multi-turn interactions.

\subsection{Related Works}

\paragraph{Interaction-Grounded Learning and Learning from Implicit Feedback}
The study of Interaction-Grounded Learning was initiated by \citet{xie2021interaction}, who modeled feedback as being conditionally independent of both context and action given the underlying latent reward.
This framework was subsequently extended by \citet{xie2022interaction} to allow feedback to depend on the chosen action, while remaining independent of the context conditioned on the reward and action.
\citet{hu2024information} later adopted the original formulation of \citet{xie2021interaction}, introducing an information-theoretic mechanism to explicitly enforce the required conditional independence.
Addressing the need for personalization, \citet{maghakian2023personalized} empirically extended the framework to incorporate user-specific reward structures, while \citet{zhang2024provably} developed the first computationally efficient algorithm with provable sublinear regret in this personalized setting.
More recently, \citet{xu2025provably} investigated learning with natural language feedback, establishing regret guarantees assuming access to a reliable signal-verification oracle. However, to the best of our knowledge, no prior work addresses the challenge of interaction-grounded learning with personalized feedback in the multi-step MDP setting.

\paragraph{Contextual Bandits and MDPs}
Our work also connects to a broad literature on statistically and computationally efficient contextual learning.
This includes classical contextual bandits \citep{langford2007epoch,agarwal2012contextual,agarwal2014taming,foster2018contextual,foster2018practical,foster2020beyond,simchi2021bypassing}, where the learner directly observes the reward for the selected action.
Extensions of this standard model consider structured observations, such as contextual bandits with feedback graphs \citep{zhang2024efficient,zhang2024practical}, where reward visibility is governed by an underlying graph structure, and contextual partial monitoring \citep{bartok2012partial,kirschner20a}, where feedback is provided via a signal matrix or linear operator.
Most relevant to our setting is the recent line of work on contextual MDPs \citep{levy2022learning,levy2023optimism,levy2023efficient,levy2024eluderbased}, which studies learning with general function approximation under stochastic or adversarial contexts. However, unlike our work, all these approaches assume access to direct reward observations.
\section{Preliminaries}\label{sec: prelim}
Throughout this paper, we use $[N]$ to denote the set $\{1,2,\dots, N\}$ for any positive integer $N$, and $\Delta_{\calB}$ to denote the probability simplex over a finite set $\calB$.

We study IGL with personalized feedback within the framework of contextual episodic MDPs with a finite horizon $H\geq 1$ and an unknown transition kernel.
Formally, the MDP is defined by a finite state space $\calS$, a finite action set $\calA$ with cardinality $|\calA|=K$, and a transition kernel $\calP:\calS\times\calA\mapsto\Delta_{\calS}$, where $\calP(s'|s,a)$ denotes the probability of transitioning to state $s'$ after taking action $a$ in state $s$.\footnote{For notational simplicity, we assume a common action set $\calA$ across all states; however, our results extend straightforwardly to the case of state-dependent action sets.}
Without loss of generality, we assume the state space $\calS$ is partitioned into $H$ disjoint layers, denoted as $\calS = \bigcup_{h=1}^H \calS_h$, where $\calS_1=\{s_1\}$ consists of a fixed singleton start state.
Transitions occur solely between consecutive layers, meaning that $\calP(s'|s,a) > 0$ implies $s\in \calS_h$ and $s'\in \calS_{h+1}$ for some $h \in [H-1]$.

\paragraph{Learning Protocol.}
The interaction proceeds in episodes $t \in [T]$.
At the beginning of each episode $t$, the learner observes a context $x_t \in \calX$ drawn i.i.d.\@ from an unknown distribution $\calD$. Define 
The learner then selects a policy $\pi_t: \calX \times \calS \to \Delta_{\calA}$ based on the current context and history.
Starting from the initial state $s_{t,1} = s_1$, for each step $h \in [H]$, the learner observes the current state $s_{t,h} \in \calS_h$, samples an action $a_{t,h} \sim \pi_t(\cdot| x_t, s_{t,h})$, and transitions to the next state $s_{t,h+1} \sim \calP(\cdot|s_{t,h}, a_{t,h})$. Note that we assume the transition dynamics are {independent} of the context.
This modeling choice is applicable to a wide range of scenarios, including multi-task reinforcement learning, where the context defines the task while the underlying physical/logical laws of the environment remain invariant across tasks.

Crucially, unlike classic contextual MDPs, the learner does not observe explicit rewards during the episode.
Instead, after the trajectory $\tau_t = (s_{t,1}, a_{t,1}, \ldots, s_{t,H}, a_{t,H})$ concludes, the learner receives a feedback signal $y_t \in \calY$ for some observation space $\calY$.
This signal serves as a proxy for an unobserved binary reward $r_t \in \{0,1\}$, which is realized solely at the final step, i.e., $r_t \sim \text{Bernoulli}(f^\star(x_t, s_{t,H}, a_{t,H}))$ for some fixed and unknown reward function $f^\star$.
We focus on this \emph{terminal-reward} MDP setting as it naturally models applications like LLM interactions, where user satisfaction depends primarily on the quality of the final response rather than the intermediate conversation leading up to it.
This setup extends the contextual bandit framework ($H=1$) studied in \citet{zhang2024provably} to the setting with general horizon $H$.

\paragraph{Feedback Structure.}
Since the true reward $r_t$ is latent, learning is impossible without structural assumptions linking the observed feedback $y_t$ to $r_t$.
Extending the standard IGL framework with personalized feedback~\citep{maghakian2023personalized,zhang2024provably}, we assume the feedback mechanism satisfies the following conditional independence property, which says that the feedback is conditionally independent of the trajectory given the context, the final state, and the realized reward.

\begin{assumption}[Conditional Independence of Feedback]\label{asm:feedback}
For any tuple of context, trajectory, reward, and feedback $(x, \tau, r, y)$ where $\tau=(s_1,a_1,\dots,s_{H},a_H)$, the feedback $y$ is conditionally independent of the trajectory prefix $(s_1, a_1, \ldots, s_{H-1}, a_{H-1})$ and the action $a_H$, given the context $x$, the final state $s_H$, and the realized reward $r$.
Formally,
$
    y \perp (s_1, a_1, \ldots, s_{H-1}, a_{H-1}, a_H) \mid (x, s_H, r).
$
\end{assumption}

This assumption captures the intuition that user feedback (e.g., a ``thumbs up'') is a reaction to the final output quality (the realized reward) relative to the user's intent, rather than the specific means used to achieve it.
Consequently, conditioned on the reward, the feedback is independent of both the internal reasoning path and the final action $a_H$, as any action yielding the same reward is indistinguishable from the perspective of the feedback mechanism. 

\paragraph{Realizability.} Following the literature of realizable contextual MDP \citep{levy2023efficient,levy2023optimism,levy2024eluderbased} and context-dependent IGL \citep{zhang2024provably}, we make the following realizability assumption on the expected reward given the context, state, and action, as well as a decoder for the realized reward.
\begin{assumption}[Realizability]\label{asm:realize}
The learner is given a function class
$\calF \subseteq [0,1]^{\calX\times\calS_H\times\calA}$
and, for each $s\in\calS_H$, a state-specific decoder class
$\Phi_s \subseteq \{0,1\}^{\calX\times\calY}$.
Define the product class $\Phi \triangleq \prod_{s\in\calS_H}\Phi_s$.
Each $\phi\in\Phi$ is a tuple $(\phi_s)_{s\in\calS_H}$ and induces a mapping
$\phi:\calX\times\calY\times\calS_H\to\{0,1\}$ via $\phi(x,y,s)\triangleq \phi_s(x,y)$.
We assume that there exist unknown $f^\star\in\calF$ and $\phi^\star=(\phi_s^\star)_{s\in \calS_H}\in\Phi$ such that for all $t\in[T]$,
$\E[r_t|x_t,s_{t,H},a_{t,H}]=f^\star(x_t,s_{t,H},a_{t,H})$
and $\phi^\star(x_t,y_t,s_{t,H})=r_t$.
\end{assumption}

Following \citep{zhang2024provably}, we assume that $\calF$ and $\Phi$ are both finite function classes and denote $|\calF|$ and $|\Phi|$ as the cardinality of $\calF$ and $\Phi$. Our results can be directly extended to broader function classes with infinite size, which will be discussed in later sections.

\paragraph{Identifiability for States.} 
As pointed out by \citet{xie2021interaction,maghakian2023personalized,xie2022interaction,zhang2024provably}, it is impossible to learn the optimal policy without breaking the symmetry between reward values of $0$ and $1$. To address this, we introduce the following assumption on the structure of the reward function:

\begin{assumption}\label{asm:identify}
    We assume that the true expected reward function $f^\star$ defined in \pref{asm:realize} satisfies that: there exist known constants $M\in(0,\frac{K}{2})$, $c\in[0,1]$, and $\theta \in (0,1)$ such that given context $x$, each final-layer state $s \in \calS_H$ falls into either one of the following two categories:
    \begin{itemize}
        \item \textbf{Heterogeneous state}: $\sum_{a\in[K]} f^\star(x,s,a) \leq M < \frac{K}{2}$, $\max_{a\in[K]} f^\star(x,s,a) \geq \theta > 0$, with $\frac{\theta(K-M)}{M} \triangleq\sigma>1$.
        \item \textbf{Homogeneous state}: for all $a \in [K]$, $f^\star(x,s,a) = c$.
    \end{itemize}
\end{assumption}

This assumption generalizes the identifiability condition from the contextual bandit setting ($H=1$) of \citet{zhang2024provably} to the episodic MDP setting.
Specifically, the {heterogeneous} state satisfies the same sparsity and separation conditions as in \citet{zhang2024provably}, ensuring sufficient information asymmetry to enable efficient learning.
In contrast, the {homogeneous} state captures MDP-specific scenarios where actions become indistinguishable due to identical feedback.
Note that these two definitions are mutually exclusive: for a homogeneous state, the total reward is $\sum_a f^\star(x,s,a) = Kc$ and the maximum is $\max_a f^\star(x,s,a) = c$, which implies the separation ratio $\frac{\theta(K-M)}{M} \le \frac{c(K-Kc)}{Kc} < 1$, strictly violating the heterogeneity condition $\sigma > 1$.
The existence of homogeneous state is common in scenarios such as learning language model where the system may become stuck in ``degenerate'' states, generating uniformly poor responses regardless of the chosen actions. 

One may also question why the constant $c$ needs to be known for homogeneous states. However, we argue that this knowledge is necessary for learning the optimal policy because, within an homogeneous state, the feedback is identical across all actions, making the reward value indistinguishable from interaction data alone. Without knowing $c$, the learner cannot determine the relative value of reaching an homogeneous state versus a heterogeneous state, leading to an impossibility to learn the optimal policy. Practically, homogeneous states typically correspond to degenerate scenarios or failure modes, such as a conversational agent producing a generic ``I don't know'' response, where the designer can explicitly assign a default utility (e.g., $c=0$) to penalize uninformative outcomes.

\paragraph{Goal.} The goal of the learner is to design a sequence of policies $\{\pi_t\}_{t=1}^T$ that maximize the cumulative reward over $T$ episodes. We measure the performance via the notion of regret defined as follows:
\begin{align}\label{eqn:regret}
    \Reg \triangleq T\cdot V^\star - \E\left[\sum_{t=1}^TV(\pi_t)\right].
\end{align}
Here, $V(\pi) \triangleq \E [ f^\star(x, s_{H}, a_{H}) ]$ denotes the value of policy $\pi$, 
where the expectation is taken over the random context and the trajectory $\{(s_{h}, a_{h})\}_{h=1}^H$ induced by the policy $\pi$ given the context $x$ and the transition dynamics. $V^\star$ denotes the highest possible value achieved by any stochastic policy in the class $\Pi \triangleq \{ \pi : \calX \times \calS \to \Delta_{\calA} \}$, and we denote the corresponding optimal policy by $\pi^\star \triangleq \argmax_{\pi \in \Pi} V(\pi)$. The expectation in \pref{eqn:regret} is taken over the internal randomness of the algorithm and the stochasticity of the algorithm's observations.

\paragraph{Other Notations.}  For all $s\in \calS$, define $P^{\pi}(s)$ as the probability of reaching state $s$ under policy $\pi$ and define $p_s^\star = \max_{\pi\in \Pi}P^{\pi}(s)$ to be the maximum probability of reaching state $s$. Denote $\mathbf{1}$ to be the all-one vector in an appropriate dimension. Denote $q_{\unif}$ to be the uniform distribution over an appropriate action space. 

\section{Reward Decoder Learning}\label{sec: decoder}
In this section, we detail the construction of our reward estimator, which transforms indirect feedback into a usable proxy for latent rewards.
Our approach proceeds in three stages:
first, we identify reachable terminal states to ensure valid data collection;
second, we learn the inverse kinematic (posterior distribution) of actions at these states;
and third, we construct a Lipschitz-continuous reward estimator that robustly handles both heterogeneous and homogeneous states.

\subsection{Step 1: Reachable State Identification}\label{sec: reachability}
The first challenge in learning the reward decoder is ensuring we can collect sufficient feedback data.
If a state has intrinsically low maximum reaching probability $p_s^\star$, it will never be visited often enough to allow reliable estimation; at the same time, its contribution to the total regret is also inherently negligible.
Therefore, our goal in this step is to identify the set of states that are reachable with sufficiently high probability. 

We begin by learning a homing policy for each state $s\in\calS_H$ to maximize visitation.
Specifically, for each target state $s$, we define a dummy reward function that is $1$ if the learner reaches $s$ and $0$ otherwise. While any RL algorithm that provides provable PAC or regret guarantees suffices, we employ \euler~\citep{EULER2019} on this auxiliary MDP to learn a policy $\wh{\pi}_s$.
This ensures that after $\otil(1/\epsilon^2)$ episodes, we obtain a policy capable of visiting $s$ near-optimally (up to error $\epsilon$).
The full procedure is outlined in \pref{alg:homing} and deferred to \pref{app: decoder}.

\begin{restatable}{lemma}{stateIdentify}\label{lem:state_identify}
    Applying \pref{alg:homing} to each $s\in\calS_H$ with $N=\frac{C\cdot SKH\log(SKH/\delta)}{\eps^2}$ episodes guarantees that with probability at least $1-\delta$, the output policy set $\{\wh{\pi}_s\}_{s\in\calS_H}$ satisfies $P^{\wh{\pi}_s}(s)\geq p_s^\star - \eps$ for all $s\in\calS_H$, where $C\geq 1$ is a universal constant.
\end{restatable}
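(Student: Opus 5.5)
The plan is to reduce each target state $s\in\calS_H$ to a standard tabular episodic RL problem and use the known regret guarantee of \euler~\citep{EULER2019} together with an online-to-batch conversion. Fix $s\in\calS_H$. The auxiliary MDP shares the original state space, actions and transitions, and gives a deterministic reward $r_s(s_h,a_h)=\mathbb{1}[s_h=s]$ collected only at layer $H$. Contexts are irrelevant here because transitions do not depend on them, so we may restrict to context-independent Markov policies. With this reward, the value of any policy $\pi$ is exactly $P^{\pi}(s)$, and its optimal value is $p_s^\star$. For finite-horizon tabular MDPs the optimum over all (possibly history-dependent) policies is attained by a deterministic Markov policy, so the supremum defining $p_s^\star$ over $\Pi$ matches the auxiliary optimum. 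Rewards lie in $[0,1]$ and the total return is at most $1$.

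\textbf{Step 1 (regret of \euler).} Running \euler for $N$ episodes on this MDP yields policies $\pi^{(1)},\dots,\pi^{(N)}$. With probability at least $1-\delta'$ (\euler's high-probability bound, possibly loosened to a worst-case form),
\[
\sum_{n=1}^N \bigl(p_s^\star - P^{\pi^{(n)}}(s)\bigr) \le C_0\Bigl(\sqrt{SKHN\log(SKHN/\delta')} + \mathrm{poly}(S,K,H)\log(SKHN/\delta')\Bigr).
\]
Because the return is bounded by $1$, the variance-aware form of the bound gives the $\sqrt{SKHN}$-type leading term (up to $H$ factors, which can be absorbed by a crude bound).

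\textbf{Step 2 (output policy).} Let \pref{alg:homing} output $\wh\pi_s$, the uniform mixture over $\pi^{(1)},\dots,\pi^{(N)}$: sample $n$ uniformly once per episode and follow $\pi^{(n)}$. Since $P^{\pi}(s)$ is linear in the mixture weights, $p_s^\star-P^{\wh\pi_s}(s)$ equals the average regret. This is at most $\sqrt{C_0^2 SKH\log(\cdot)/N}$ plus a lower-order term. If a Markov stationary policy is required, the mixture can instead be converted to its equivalent Markov policy via occupancy measures, which preserves the visitation probabilities. Plugging in $N=C\,SKH\log(SKH/\delta)/\eps^2$ and choosing $C$ large enough makes this at most $\eps$, handling the $\log N$ dependence via $\log(1/\eps)$ absorbed into $C$. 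Alternatively, the last iterate could be chosen by a validation step, but the mixture avoids that.

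\textbf{Step 3 (union bound).} Set $\delta'=\delta/S$ and take a union bound over all $s\in\calS_H$; the extra $\log S$ factor is again absorbed into $C$.

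The main obstacle is the bookkeeping in Step 2. The lower-order term of \euler's bound scales like $\mathrm{poly}(S,K,H)/N$, and $N$ must also cover the $\log N$ inside the square root, so the constant $C$ cannot literally be universal unless $\eps$ is small relative to $1/\mathrm{poly}(S,K,H)$. I would handle this by stating the lemma for $\eps$ below such a threshold, or by allowing $C$ to hide logarithmic factors. I would also carefully match \euler's regret statement to this return-bounded-by-$1$ setting.
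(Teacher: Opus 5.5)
Your proposal follows essentially the same route as the paper: the paper also runs \euler{} on the indicator reward for each $s\in\calS_H$, takes the per-state guarantee from Lemma~3.4 of \citet{jin2020reward} (itself an \euler-based argument), and concludes by linearity of the uniform mixture, $P^{\wh{\pi}_s}(s)=\frac{1}{N}\sum_{n=1}^N P^{\pi_{s,n}}(s)\ge p_s^\star-\eps$; you instead derive that per-state guarantee directly from \euler{}'s regret bound and add an explicit union bound. Your caveat that $C$ cannot literally be universal (because of the $\log N$ factor and \euler{}'s lower-order term) is legitimate and is glossed over in the paper's proof, which even uses $S^2KH$ rather than $SKH$ in $N$; it is harmless for \pref{thm:final_regret}, where $\eps\to 0$ with $T$ and logarithmic factors are hidden in $\otil$.
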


Next, we quantify the reachability of each state using these learned policies.
Since both the optimal reachability $p_s^\star$ and $P^{\wh{\pi}_s}(s)$ are unknown, we must estimate them based on empirical data.
Let $\wh{p}_s$ denote the empirical visitation frequency of state $s$ observed by running $\wh{\pi}_s$ for $N$ episodes.
A standard Hoeffding-Azuma inequality guarantees that with probability at least $1-\delta$, $|\wh{p}_s-P^{\wh{\pi}_s}(s)|\leq \sqrt{\frac{\log(S/\delta)}{2N}}$ for all $s\in \calS_H$.
Combining this concentration bound with \pref{lem:state_identify} allows us to reliably distinguish between states with high and low maximum reaching probability based solely on the observable empirical visitation frequency $\wh{p}_s$.

\begin{restatable}{lemma}{identifySignificant}\label{lem:identify_significant_state}
    Let $\beta=\sqrt{\frac{\log(SKH/\delta)}{2N}}$. Then, with probability at least $1-2\delta$, for any threshold $\tau\in(0,1)$:
    (i) if $\wh{p}_s\ge \tau+\beta$, then $p_s^\star \geq P^{\wh{\pi}_s}(s)\ge \tau$;
    (ii) if $\wh{p}_s\le \tau-(\beta+\eps)$, then $p_s^\star\le \tau$.
\end{restatable}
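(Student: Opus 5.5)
The plan is to intersect two high-probability events and then argue each direction by direct inequality chasing.

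Let $\calE_1$ be the event of \pref{lem:state_identify}: $P^{\wh{\pi}_s}(s)\ge p_s^\star-\eps$ for all $s\in\calS_H$. This holds with probability at least $1-\delta$.

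Let $\calE_2$ be the concentration event $|\wh{p}_s-P^{\wh{\pi}_s}(s)|\le\beta$ for all $s\in\calS_H$. Here $\wh{p}_s$ is the average of $N$ i.i.d.\ indicators $\mathbf{1}[s_{H}=s]$ from running the fixed policy $\wh{\pi}_s$. We condition on $\wh{\pi}_s$, which is determined by data independent of these fresh rollouts. Hoeffding then gives deviation $\sqrt{\log(2/\delta')/(2N)}$ for each $s$. A union bound over the at most $S$ states in $\calS_H$ gives a failure probability of at most $\delta$ once we take $\beta=\sqrt{\log(SKH/\delta)/(2N)}$. This choice dominates $\sqrt{\log(2S/\delta)/(2N)}$ whenever $KH\ge 2$, and the degenerate case can be absorbed into constants. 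A further union bound gives $\Pr(\calE_1\cap\calE_2)\ge 1-2\delta$.

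The key point is that both events are uniform over $s$ and do not involve $\tau$. The conclusions therefore hold simultaneously for every threshold $\tau\in(0,1)$, with no union bound over $\tau$. This is the only subtlety, and it is mild.

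On $\calE_1\cap\calE_2$ the two claims follow directly.

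\textbf{Part (i).} Suppose $\wh{p}_s\ge\tau+\beta$. Then $P^{\wh{\pi}_s}(s)\ge \wh{p}_s-\beta\ge\tau$. Also $p_s^\star\ge P^{\wh{\pi}_s}(s)$ by definition of $p_s^\star$ as a maximum over $\Pi\ni\wh{\pi}_s$. Here $\wh{\pi}_s$ may be a history-dependent or context-ignoring policy, but it still lies in $\Pi$ since the transitions do not depend on context.

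\textbf{Part (ii).} Suppose $\wh{p}_s\le\tau-\beta-\eps$. Then $p_s^\star\le P^{\wh{\pi}_s}(s)+\eps\le \wh{p}_s+\beta+\eps\le\tau$.

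The step needing the most care is justifying the independence for Hoeffding. The $N$ evaluation episodes of $\wh{\pi}_s$ must be fresh rollouts run after $\wh{\pi}_s$ is fixed, which I would read off \pref{alg:homing}. If instead the estimate reused learning-phase data, I would fall back to Azuma--Hoeffding on the martingale of visitation indicators, which yields the same bound.
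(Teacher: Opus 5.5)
Your proof is correct and is essentially the paper's argument: you intersect the event of \pref{lem:state_identify} with the uniform concentration event $|\wh{p}_s-P^{\wh{\pi}_s}(s)|\le\beta$, then chase the same inequalities for (i) and (ii), noting as you do that neither event depends on $\tau$. Your extra bookkeeping (the $\log(2S/\delta)$ union-bound constant and the Azuma fallback) is sound, but your justification for $p_s^\star\ge P^{\wh{\pi}_s}(s)$ needs rewording: a mixture or history-dependent $\wh{\pi}_s$ need not literally lie in $\Pi$, and context-independence of the transitions is not the reason; the inequality holds because its visitation distribution is reproduced by the Markov policy induced by its occupancy measure.
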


Finally, based on these estimates, we filter the state space to restrict our attention to states that can be reached often enough for reliable learning.
We define the set of reliably reachable states as:
\begin{align}\label{eqn:significant_state}
    \wh{\calS}_{H,\eps} \triangleq \{s\in\calS_H : \wh{p}_s \ge 4\eps\}.
\end{align}
By applying \pref{lem:identify_significant_state} with the chosen sample size $N$ defined in \pref{lem:state_identify} and threshold parameters, we can guarantee that any state excluded from this set (i.e., $s\notin \wh{\calS}_{H,\eps}$) has a maximum reaching probability $p_s^\star \le 6\eps$.
Consequently, omitting these states incurs negligible regret, allowing us to focus the subsequent reward decoding steps exclusively on $\wh{\calS}_{H,\eps}$.

\subsection{Step 2: Inverse Kinematic Learning}\label{sec: ik_learning}
Having identified the set of reliably reachable states $\wh{\calS}_{H,\eps}$, we proceed to the second component: inverse kinematic learning. Specifically, inspired by~\citet{zhang2024provably}, we adopt an \emph{inverse kinematic} strategy, which infers the latent reward by analyzing the posterior distribution of actions conditioned on the context, state, and observed feedback.

We first characterize the theoretical properties of this posterior when actions are selected uniformly.

\begin{restatable}{lemma}{posterior}\label{lem:posterior}
For any context $x\in\calX$ and state $s\in\calS_H$, suppose the action $a$ is chosen uniformly from $[K]$.
Let $r$ denote the realized (binary) reward and $y$ denote the feedback.
Then the posterior distribution of $a$ conditioned on $(x,y,s)$ is given by
\begin{align*}
\Pr[a | x,y,s]
&= \frac{f^\star(x,s,a)\cdot \phi^\star(x,y,s)}{\sum_{a'=1}^K f^\star(x,s,a')} + \frac{(1-f^\star(x,s,a))\cdot (1-\phi^\star(x,y,s))}{K-\sum_{a'=1}^K f^\star(x,s,a')}.
\end{align*}
\end{restatable}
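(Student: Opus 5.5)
The plan is a direct Bayes computation that routes through the latent reward $r$ and uses \pref{asm:feedback} and \pref{asm:realize} to remove the dependence on $y$ beyond $\phi^\star(x,y,s)$. Fix $(x,s)$ and draw $a$ uniformly from $[K]$. Conditioned on $(x,s)$, the generative order is: $a$ is uniform, then $r\sim\text{Bernoulli}(f^\star(x,s,a))$, then $y$ is drawn given $(x,s,r)$. By \pref{asm:feedback}, $y$ is independent of $a$ given $(x,s,r)$, so the conditional law of $y$ given $(x,s,r,a)$ does not depend on $a$.

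By \pref{asm:realize}, $r=\phi^\star(x,y,s)$ almost surely, so $r$ is a deterministic function of $(x,y,s)$. Hence conditioning on $(x,y,s)$ is the same as conditioning on $(x,y,s,r)$ with $r=\phi^\star(x,y,s)$. The key step is then
\[
\Pr[a\mid x,y,s]=\Pr[a\mid x,y,s,r]=\Pr[a\mid x,s,r].
\]
The second equality uses the conditional independence $y\perp a\mid(x,s,r)$: by Bayes' rule the likelihood $\Pr[y\mid x,s,r,a]=\Pr[y\mid x,s,r]$ cancels between numerator and denominator.

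Next compute $\Pr[a\mid x,s,r]$ with Bayes' rule and the uniform prior. This gives
\[
\Pr[a\mid x,s,r=1]=\frac{f^\star(x,s,a)/K}{\sum_{a'} f^\star(x,s,a')/K}=\frac{f^\star(x,s,a)}{\sum_{a'} f^\star(x,s,a')},
\]
\[
\Pr[a\mid x,s,r=0]=\frac{1-f^\star(x,s,a)}{K-\sum_{a'} f^\star(x,s,a')}.
\]
Finally, $r=\phi^\star(x,y,s)\in\{0,1\}$ selects exactly one of the two cases. Writing the result as $\phi^\star\cdot(\text{case }1)+(1-\phi^\star)\cdot(\text{case }0)$ gives the stated formula.

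The main point of care is the cancellation step, specifically applying \pref{asm:feedback} correctly. That assumption is stated for the prefix together with $a_H$, and it implies the marginal statement $y\perp a_H\mid(x,s_H,r)$ needed here.

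A second point is the denominators, which must be nonzero whenever the corresponding event has positive probability. If $\sum_{a'}f^\star=0$, then $r=1$ has zero probability, so any $y$ with $\phi^\star(x,y,s)=1$ occurs with probability zero, and the same holds symmetrically for $r=0$. I will state the identity on the support, i.e., with the convention that a term with zero denominator is multiplied by a zero indicator.
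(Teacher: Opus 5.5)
Your proposal is correct and is essentially the paper's Bayes computation with the same two ingredients: $y\perp a\mid(x,s,r)$ from \pref{asm:feedback}, and the exact decoding $r=\phi^\star(x,y,s)$ from \pref{asm:realize}. The only difference is the order of steps. You condition on $r$ first so that the likelihood cancels at once, whereas the paper expands $\Pr[y\mid x,a,s]$ as a mixture over $r$ and converts each term via $\Pr[y\mid x,s,r]/\Pr[y\mid x,s]=\Pr[r\mid x,y,s]/\Pr[r\mid x,s]$. Your treatment of the zero-denominator cases is a small extra bit of rigor that the paper leaves implicit.
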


The following lemma establishes how the true reward can be recovered from this posterior.
Crucially, for heterogeneous states, the posterior deviates from uniformity in the $\ell_\infty$-norm, enabling reward inference.
In contrast, for homogeneous states, the posterior collapses to the uniform distribution, signalling that the expected reward is the known constant $c$.

\begin{restatable}{lemma}{ikbinary}\label{lem:decoder}
For any context $x\in\calX$, action $a\in[K]$, and state $s\in\calS_H$, define $h^\star(x,y,s)\in\Delta_K$ with components
\begin{align}
h_a^\star(x,y,s)
&\triangleq \frac{f^\star(x,s,a)\cdot \phi^\star(x,y,s)}{\sum_{a'=1}^K f^\star(x,s,a')} + \frac{(1-f^\star(x,s,a))(1-\phi^\star(x,y,s))}{K-\sum_{a'=1}^K f^\star(x,s,a')}. \label{eqn:ik_h}
\end{align}
If $s$ is a heterogeneous state, then:
\begin{itemize}
    \item $\bigl\|h^\star(x,y,s)-\tfrac{1}{K}\mathbf{1}\bigr\|_\infty \ge \kappa$, where $\kappa\triangleq\frac{K\theta-M}{K(K-M)}$;
    \item $r(x,s,a)=\phi^\star(x,y,s)\ge \mathbbm{1}\!\left\{h_a^\star(x,y,s)\ge \frac{\theta}{M}\right\}$ for all $a\in[K]$;
    \item $r(x,s,a)=\phi^\star(x,y,s)=\mathbbm{1}\!\left\{h_a^\star(x,y,s)\ge \frac{\theta}{M}\right\}$ when $a=\argmax_{a'\in[K]}f^\star(x,s,a')$;
\end{itemize}
otherwise, $h^\star(x,y,s)=\tfrac{1}{K}\mathbf{1}$.
\end{restatable}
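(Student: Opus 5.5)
The plan is to fix $(x,s)$, write $F\triangleq\sum_{a'=1}^K f^\star(x,s,a')$ and $f_a\triangleq f^\star(x,s,a)$, and split on the binary value of $\phi^\star(x,y,s)$. Under this split, \pref{eqn:ik_h} reduces to $h^\star_a=f_a/F$ when $\phi^\star=1$ and to $h^\star_a=(1-f_a)/(K-F)$ when $\phi^\star=0$. By \pref{asm:realize}, $r=\phi^\star(x,y,s)$, so every claim about $r$ is a claim about $\phi^\star$. Throughout I will use two consequences of \pref{asm:identify}.

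\begin{itemize}
\item From $\sigma>1$ we get $\theta(K-M)>M$. This gives $K\theta>M+\theta M>M$, so $\kappa>0$. It also gives $\frac{1}{K-M}<\frac{\theta}{M}$.
\item From $M<K/2$ we get $M<K-M$.
\end{itemize}

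\textbf{First bullet (heterogeneous).} If $\phi^\star=1$, take $a^\star=\argmax_{a}f_a$. Then $h^\star_{a^\star}=f_{a^\star}/F\ge\theta/M$, so
\[
h^\star_{a^\star}-\tfrac1K\ \ge\ \frac{K\theta-M}{KM}\ \ge\ \frac{K\theta-M}{K(K-M)}=\kappa,
\]
where the last step uses $M<K-M$.

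If $\phi^\star=0$, the coordinate $a^\star$ falls below uniform. Using $f_{a^\star}\ge\theta$,
\[
\tfrac1K-h^\star_{a^\star}\ \ge\ \tfrac1K-\frac{1-\theta}{K-F}\ =\ \frac{K\theta-F}{K(K-F)}.
\]
The map $F\mapsto (K\theta-F)/(K-F)$ has derivative with the sign of $K\theta-K<0$, so it is nonincreasing in $F$. Since $F\le M$, the right-hand side is at least its value at $F=M$, which is $\kappa$. This monotonicity step is the only mildly delicate point of the proof.

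\textbf{Second and third bullets.} If $\phi^\star=1$, the inequality $\phi^\star\ge\mathbbm{1}\{\cdot\}$ is trivial. In the same case, $h^\star_{a^\star}\ge\theta/M$, so the indicator at $a^\star$ equals $1=\phi^\star$, which gives the third bullet for this value of $\phi^\star$.

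If $\phi^\star=0$, then for every $a$ we have $h^\star_a\le\frac{1}{K-F}\le\frac{1}{K-M}<\frac{\theta}{M}$. Hence every indicator is $0=\phi^\star$, and both bullets hold.

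\textbf{Homogeneous case.} Here $F=Kc$. If $\phi^\star=1$, then $h^\star_a=c/(Kc)=1/K$. If $\phi^\star=0$, then $h^\star_a=(1-c)/(K-Kc)=1/K$.

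The boundary cases $c\in\{0,1\}$ need a remark. When $c=0$, the event $\phi^\star=1$ has probability zero, and when $c=1$, the event $\phi^\star=0$ has probability zero. In either case the undefined $0/0$ term is attached to an event of probability zero. It is read as the posterior from \pref{lem:posterior}, which is uniform because, for every realized $y$, the likelihood of $y$ given $a$ does not depend on $a$.
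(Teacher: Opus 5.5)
Your proposal is correct and follows the paper's proof essentially step for step. Both split on $\phi^\star$, use the bound $\theta/M-1/K$ together with $M<K-M$, use the monotonicity of $u\mapsto (K\theta-u)/(K(K-u))$, and use the bound $1/(K-M)<\theta/M$ for the two indicator bullets. Your remark on the boundary cases $c\in\{0,1\}$, where a $0/0$ term appears, adds a bit of care that the paper's ``direct calculation'' for homogeneous states skips.
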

The proof is provided in \pref{app: decoder}.
This result generalizes the inverse-kinematics characterization of Lemma 2 in \citet{zhang2024provably} to the MDP setting. Specifically, when $s$ is a heterogeneous state, $\mathbbm{1}\!\left\{h_a^\star(x,y,s)\ge \frac{\theta}{M}\right\}$ serves as an \emph{underestimator} of the true reward $r(x,s,a)$ and matches the
reward of the optimal policy. This matches the main idea in the estimator construction in the single-state case shown in \citet{zhang2024provably}. However, unlike the single-state setting, here the posterior serves an additional purpose, which acts as a discriminator between state types.
If the posterior is bounded away from uniformity, the state is identified as heterogeneous; conversely, if it is uniform, the state is then homogeneous.

With the theoretical characterization of $h^\star$ established, we now turn to estimating it from data.
We focus exclusively on the set of reachable states $\wh{\calS}_{H,\eps}$ identified in Step 1.
For each $s \in \wh{\calS}_{H,\eps}$, we employ the approximate homing policy $\wh{\pi}_s$ learned in Step 1 to collect a dataset of context--action--signal tuples.
Whenever $\wh{\pi}_s$ reaches state $s$, we record the tuple $(x,s,a,y)$. Note that according to the definition of the policies collected in \pref{alg:homing}, action $a$ is uniformly sampled from $[K]$. Since every $s \in \wh{\calS}_{H,\eps}$ satisfies $\wh{p}_s \ge 4\eps$, \pref{lem:identify_significant_state} ensures the true reaching probability is at least $3\eps$.
This guarantees that we can collect a dataset $D_s$ of size $N_0$ efficiently by using $\otil(\frac{N_0}{\eps})$ number of episodes.
The procedure is shown in \pref{alg:tuple_collection} deferred to \pref{app: decoder}.

\begin{restatable}{lemma}{sampling}\label{lem:sampling}
With probability at least $1-4\delta$, \pref{alg:tuple_collection} terminates after running
$\order\!\left(\frac{S\,(N_0+\log(S/\delta))}{\eps}\right)$ episodes.
\end{restatable}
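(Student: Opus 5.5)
We bound the number of episodes spent on each $s\in\wh{\calS}_{H,\eps}$ separately, then take a union bound over the at most $S$ states. The probability budget $1-4\delta$ splits as follows: $2\delta$ for the event of \pref{lem:identify_significant_state} (which also covers the event of \pref{lem:state_identify}), and $2\delta$ for the concentration bounds on the per-state episode counts.

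\textbf{Step 1: per-episode success probability.} On the good event of \pref{lem:identify_significant_state}, every $s\in\wh{\calS}_{H,\eps}$ has $\wh p_s\ge 4\eps$. With $N$ as in \pref{lem:state_identify} we have $\beta\le\eps$, so part (i) with threshold $\tau=4\eps-\beta\ge 3\eps$ gives $P^{\wh\pi_s}(s)\ge 3\eps$. The policy $\wh\pi_s$ is fixed once Step 1 ends, and contexts and transitions are i.i.d.\ across episodes. Hence, conditioned on the Step 1 outcome, the indicators $Z_i=\mathbbm{1}\{s_{i,H}=s\}$ over the collection episodes for $s$ are i.i.d.\ Bernoulli with mean $p\ge 3\eps$. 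The transitions do not depend on the context, and the final action is drawn uniformly, so neither affects reaching $s$.

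\textbf{Step 2: episodes needed per state.} \pref{alg:tuple_collection} stops collecting for $s$ once $N_0$ successes are recorded. So it suffices to show that with $m=\lceil c(N_0+\log(S/\delta))/\eps\rceil$ episodes, $\sum_{i\le m}Z_i\ge N_0$ with probability at least $1-\delta/S$. Apply a multiplicative Chernoff bound. The mean satisfies $\mu=mp\ge 3c(N_0+\log(S/\delta))$, so
\[
\Pr\Bigl[\textstyle\sum_i Z_i<\mu/2\Bigr]\le e^{-\mu/8}.
\]
Choosing $c$ large (e.g.\ $c=8$) gives $\mu/2\ge N_0$ and $e^{-\mu/8}\le\delta/S$. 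Alternatively, view the number of trials until the $N_0$-th success as a negative binomial variable and use its tail bound. The main point needing care is the additive $\log(S/\delta)$ term. It is needed for the case of small $N_0$, where the deviation term dominates. The multiplicative Chernoff bound handles this cleanly. An additive Hoeffding bound would instead give the worse scaling $1/\eps^2$, so it should be avoided.

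\textbf{Step 3: union bound and sum.} A union bound over the at most $S$ states in $\wh{\calS}_{H,\eps}$ costs $\delta$ in total. Adding the Step 1 events gives overall failure probability at most $4\delta$, with slack left for whatever constants the concentration needs. Summing $m$ over at most $S$ states gives a total of $\order\!\left(S(N_0+\log(S/\delta))/\eps\right)$ episodes, as claimed.
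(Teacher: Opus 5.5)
Your proof is correct and takes the route the paper indicates. The paper gives no separate appendix proof of \pref{lem:sampling}; its only justification is the remark just before the lemma that \pref{lem:identify_significant_state} gives every retained state a reaching probability of at least $3\eps$. Your per-state multiplicative Chernoff bound with a union bound over at most $S$ states is exactly the fleshed-out version of that remark, and it correctly yields the stated $\log(S/\delta)/\eps$ rather than the $\log(S/\delta)/\eps^2$ that additive Hoeffding would give. One small point: the loop \texttt{while} $n\le N_0$ actually waits for $N_0+1$ successes, but your choice of constant $c$ already absorbs this.
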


Finally, we estimate the posterior $h^\star$ for each $s \in \wh{\calS}_{H,\eps}$ using Empirical Risk Minimization (ERM) on the collected dataset $D_s$.
We consider the hypothesis class $\calH_s$ defined as:
\begin{align}\label{eqn:H_class}
\calH_s
\triangleq\Bigl\{
h_s:\calX\times\calY\to\Delta_{[K]}
~\Big|~
h_{s,a}(x,y)
=
\frac{f(x,a)\phi_s(x,y)}{\sum_{i=1}^K f(x,i)}+
\frac{(1-f(x,a))(1-\phi_s(x,y))}{K-\sum_{i=1}^K f(x,i)},f\in\calF,\phi_s\in\Phi_s,a\in[K]
\Bigr\}.
\end{align}
Since the true posterior $h_s^\star \in \calH_s$ for all $s\in\calS_H$ based on \pref{eqn:ik_h}, we can learn the posterior distribution at each state using an empirical risk minimization (ERM) oracle with squared loss. Specifically, for each state $s\in\wh{\calS}_{H,\eps}$, we estimate $h_s^\star$ using $D_s$, and the following lemma, adapted from Lemma~3 of~\citet{zhang2024provably}, shows that the ERM estimator achieves an excess risk of $\order({\log(|\calH|)/|D_s|})$ with high probability.

\begin{restatable}[Lemma 3 of \citet{zhang2024provably}]{lemma}{erm}\label{lem:erm}
For each $s\in\wh{\calS}_{H,\eps}$, let $\wh{h}_s$ be the ERM estimator defined as:
\begin{align}\label{eq:erm_h}
\wh{h}_s
\triangleq
\argmin_{h_s\in\calH_s}
\sum_{(x_i, s, a_i, y_i) \in D_s}
\bigl\|h_s(x_i,y_i)-e_{a_i}\bigr\|_2^2,
\end{align}
where $\{D_s\}_{s\in\wh{\calS}_{H,\eps}}$ is collected by running \pref{alg:tuple_collection} with $N_0$.
Then, with probability at least $1-\delta$, for all $s\in\wh{\calS}_{H,\eps}$,
\begin{align*}
\E\!\left[\bigl\|\wh{h}_s(x,y)-h^\star(x,y,s)\bigr\|_2^2\right]
\le
\order\!\left({\frac{\log\!\left(S|\calH|/\delta\right)}{N_0}}\right),
\end{align*}
where the expectation is taken over $x\sim\calD$, $a\sim q_{\unif}$, and the resulting feedback $y$ conditioned on $(x,s,a)$.
\end{restatable}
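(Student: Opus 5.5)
The plan is to treat this as a standard finite-class least-squares generalization bound. For each fixed $s\in\wh{\calS}_{H,\eps}$, the samples in $D_s$ are drawn so that the distributions match the statement. Then I would union bound over the at most $S$ states.

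\emph{Data distribution.} The tuples $(x_i,a_i,y_i)$ in $D_s$ are recorded only when $\wh{\pi}_s$ reaches $s$. Because transitions do not depend on the context, and the context is drawn i.i.d.\ at the start of each episode, I would first argue that, conditioned on reaching $s$, the context is still distributed as $\calD$. Strictly, the homing policy may depend on $x$; if so, I would note that \pref{alg:homing} produces a context-independent policy, since it is learned on the auxiliary MDP without contexts. The action $a_i$ is uniform, and by \pref{asm:feedback} the feedback $y_i$ depends only on $(x_i,s,r_i)$, with $r_i\sim\text{Bernoulli}(f^\star(x_i,s,a_i))$. Hence the tuples are i.i.d.\ from the target distribution. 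Moreover, by \pref{lem:posterior}, $\E[e_{a}\mid x,y,s]=h^\star(x,y,s)$. A stopping-time subtlety remains, namely that we collect until $|D_s|=N_0$. I would handle it by viewing the accepted samples as an i.i.d.\ sequence, since rejection sampling preserves independence, or alternatively by working with martingale (Freedman) bounds.

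\emph{Excess-risk bound.} Next I would define the loss $\ell_h(x,y,a)=\|h(x,y)-e_a\|_2^2$, which lies in $[0,2]$. Because $h^\star_s$ is the conditional mean of $e_a$ and $h_s^\star\in\calH_s$ (realizability), the excess risk satisfies
\[
\E[\ell_h-\ell_{h^\star}]=\E\|h(x,y)-h^\star(x,y,s)\|_2^2 .
\]
The variance of $\ell_h-\ell_{h^\star}$ is bounded by a constant times this same quantity, which is the standard Bernstein condition for square loss. Applying Bernstein's inequality together with a union bound over $\calH_s$ (with $|\calH_s|\le|\calF||\Phi_s|\le|\calH|$) and over the states gives the following. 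With probability $1-\delta$, every $h$ satisfying $\sum_i(\ell_h-\ell_{h^\star})\le 0$ also satisfies $\E\|h-h^\star\|^2\le O(\log(S|\calH|/\delta)/N_0)$. The ERM $\wh h_s$ satisfies this empirical condition by definition, which yields the claim.

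\emph{Main obstacle.} The main obstacle is the first step: justifying that the recorded samples have the correct context distribution and independence structure despite the random, adaptive collection time. The concentration argument itself is routine and coincides with Lemma 3 of \citet{zhang2024provably}.
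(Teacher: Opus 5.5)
Your proposal is correct and takes essentially the paper's approach: the paper gives no separate argument but invokes Lemma 3 of \citet{zhang2024provably} (a realizable finite-class square-loss ERM bound, i.e.\ your Bernstein-plus-union-bound argument) for each $s$, with the union over at most $S$ states giving the $\log(S|\calH|/\delta)$ factor. Your verification that the accepted tuples are i.i.d.\ with $x\sim\calD$ and $a\sim q_{\unif}$ is the adaptation step the paper leaves implicit, and it is sound; it uses that $\wh{\pi}_s$ from \pref{alg:homing} ignores the context, that transitions are context-free, and that rejection sampling preserves independence.
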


For notation convenience, we define the vector of estimators as $\wh{h}\triangleq(\wh{h}_s)_{s\in\wh{\calS}_{H,\eps}}$, and denote $\wh{h}(x,y,s)=\wh{h}_s(x,y)$ for any $s\in\wh{\calS}_{H,\eps}$.

\subsection{Step 3: Lipschitz Reward Estimator Construction}\label{sec: lipschitz_estimator}
In the final step, we construct a robust reward estimator using the learned posterior $\wh{h}$ from Step 2.
Based on the characterization in \pref{lem:decoder}, a natural choice for the reward estimator would be the plug-in indicator function:
\begin{align*}
\wh{r}(x,y,s,a) = c\cdot \mathbbm{1}\!\left\{\left\|\wh{h}(x,y,s)-\tfrac{1}{K}\mathbf{1}\right\|_{\infty}\leq \nu\right\}+\mathbbm{1}\!\left\{\bigl\|\wh{h}(x,y,s)-\tfrac{1}{K}\mathbf{1}\bigr\|_2>\nu\right\}
\,\mathbbm{1}\!\left\{\wh{h}_a(x,y,s)\ge \tfrac{\theta}{M}\right\},
\end{align*}
for certain small positive value $\nu$.
Intuitively, this estimator assigns the constant reward $c$ if the state is identified as homogeneous when the posterior distribution is close to uniform, and otherwise applies the thresholding rule from \pref{lem:decoder} to recover the reward.

However, as observed in \citet{zhang2024provably}, the indicator function is discontinuous and non-Lipschitz.
Consequently, small estimation errors in $\wh{h}$, which are inevitable even with the generalization guarantees of \pref{lem:erm}, can lead to huge errors in the predicted reward.
To address this, we generalize the Lipschitz surrogate construction of \citet{zhang2024provably} to our multi-step setting, ensuring a smooth transition between the homogeneous and heterogeneous state regimes.

\begin{figure}[t]
    \centering
    \includegraphics[width=\linewidth]{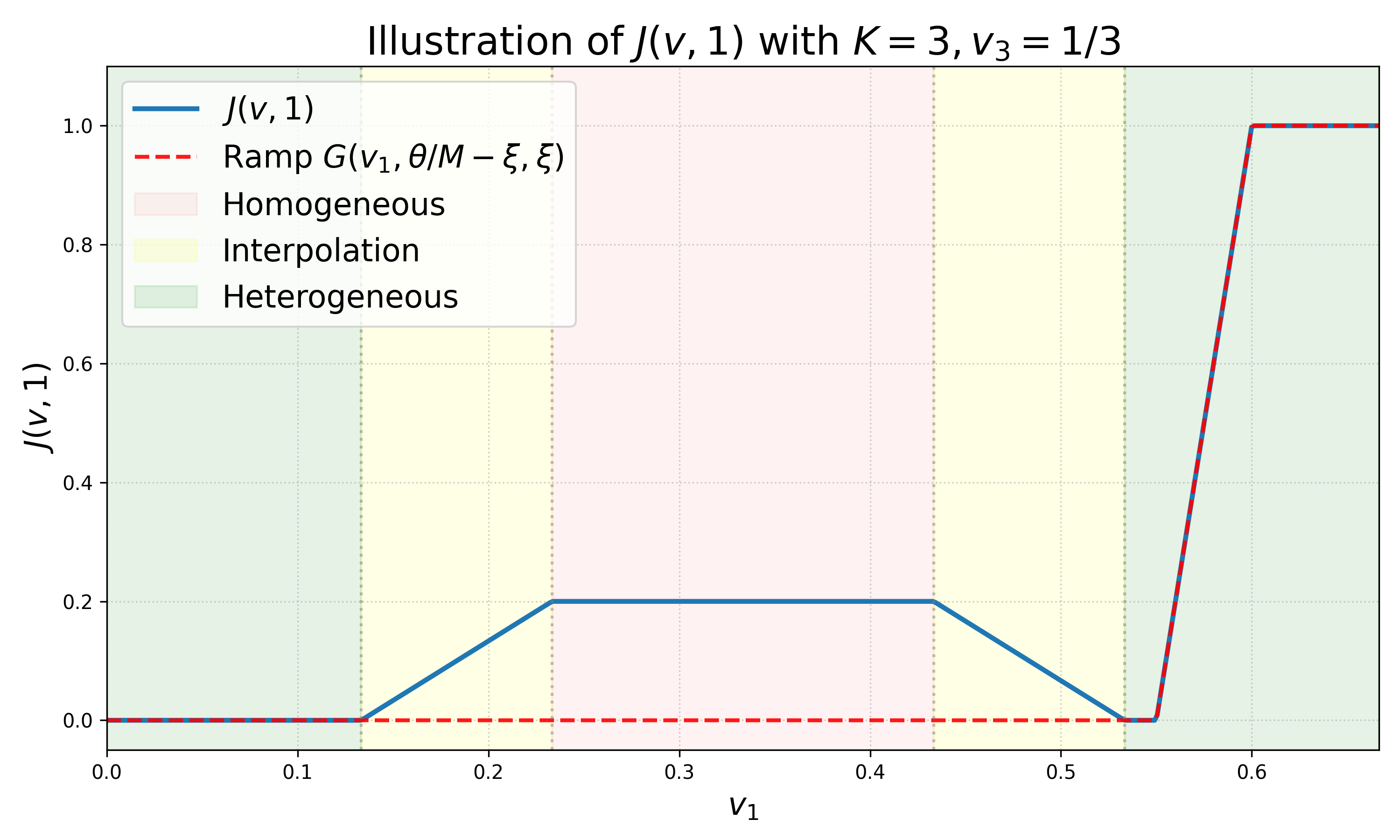}
    \caption{Illustration of Lipschitz reward decoder $J(v,1)$ when $v_1 \in [0, 2/3]$, $v_3=1/3$, $K=3$, $M=1$, $\theta=0.6$, $c=0.2$, and $\kappa=0.2$.}
    \label{fig:J_function_K3}
\end{figure}

Given any posterior distribution $v\in \Delta_{[K]}$, let $\Delta(v) \triangleq \|v-\frac{1}{K}\mathbf{1}\|_\infty$ denote its $\ell_\infty$-distance from uniformity.
We define the Lipschitz surrogate estimator $J(v,a)$ as follows:
\begin{align}\label{eqn:reward_decoder}
J(v,a) =\begin{cases}
c, & \Delta(v)\le \kappa/2,\\
\frac{2c(\kappa-\Delta(v))+(2\Delta(v)-\kappa)\cdot G\left(v_a,\tfrac{\theta}{M}-\xi,\xi\right)}{\kappa},
&\Delta(v)\in(\kappa/2,\kappa),\\
G\left(v_a,\tfrac{\theta}{M}-\xi,\xi\right),
& \text{otherwise},
\end{cases}
\end{align}
where the smoothing parameter $\xi$ is defined as $\xi = \frac{1}{2}(\frac{\theta}{M}-\frac{1}{K-M})$, and $G(\alpha,\beta,\lambda)$ is the ramp function defined in \citet{zhang2024provably}:
\begin{align*}
    G(\alpha,\beta,\lambda) \triangleq \frac{\alpha-\beta}{\lambda}\mathbbm{1}\{\beta\leq \alpha
    <\beta+\lambda\} + \mathbbm{1}\{\alpha\geq \beta+\lambda\}.
\end{align*}
To provide intuitions on the estimator defined in \pref{eqn:reward_decoder}, we visualize its geometry in \pref{fig:J_function_K3} and explain each part below:
\begin{itemize}
    \item \textbf{Homogeneous Region (Pink):} When the posterior is near-uniform ($\Delta(v) \le \kappa/2$), the state is identified as homogeneous. Here, the estimator is clamped to the constant $c$, represented by the flat section in the center of the figure.
    \item \textbf{Heterogeneous Region (Green):} When the posterior is distinct ($\Delta(v) \ge \kappa$), the state is identified as heterogeneous. In this region, the estimator follows the inverse-kinematics ramp function $G$ (shown as the red dashed line), which itself is a linear interpolation connecting the logical values 0 and 1.
    \item \textbf{Interpolation Region (Yellow):} The middle case in \pref{eqn:reward_decoder} (for $\Delta(v)\in(\kappa/2,\kappa)$) serves as a \emph{Lipschitz bridge}. As seen in the figure, this region linearly interpolates between the constant $c$ (from the pink region) and the value of $G$ (from the green region). This ensures that $J(v,a)$ smoothly transitions between the two logic types, preserving Lipschitz continuity essential for the regret analysis.
\end{itemize}
The following lemma formally verifies that this construction preserves the key properties of the ideal estimator while ensuring the required smoothness.

\begin{restatable}{lemma}{decoderProperty}\label{lem:ambigurous_state_identify}
For any context $x\in\calX$, state $s\in\calS_H$, action $a\in[K]$, and feedback $y$ generated by a realized reward $r(x,s,a)$, the estimator $J$ satisfies:
\begin{itemize}[leftmargin=*,nosep]
    \item If $s$ is a heterogeneous state:
    \begin{itemize}
        \item $r(x,s,a) = \phi^\star(x,y,s)\geq J(h^\star(x,y,s),a)$;
        \item $r(x,s,a) = \phi^\star(x,y,s)= J(h^\star(x,y,s),a)$ if $a=\argmax_{a'\in[K]}f^\star(x,s,a)$.
    \end{itemize}
    \item If $s$ is a homogeneous state: $\E[r(x,s,a)] = J(h^\star(x,y,s),a) = c$.
\end{itemize}
Moreover, $J(v,a)$ is $L$-Lipschitz in $v$ with respect to the $\ell_\infty$-norm, with $L=\frac{4}{\kappa}+\frac{1}{\xi}$, where $\kappa$ and $\xi$ are defined in \pref{lem:decoder} and \pref{eqn:reward_decoder} respectively.
\end{restatable}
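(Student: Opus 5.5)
The proof has two parts: the reward-matching properties, which come from \pref{lem:decoder}, and the Lipschitz bound, which is a piecewise check.

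\textbf{Heterogeneous states.} By \pref{lem:decoder}, $\Delta(h^\star(x,y,s))\ge\kappa$, so the third case of \pref{eqn:reward_decoder} applies and $J(h^\star,a)=G(h^\star_a,\tfrac{\theta}{M}-\xi,\xi)$.

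\emph{Underestimation.} Since $G\in[0,1]$, when $\phi^\star=1$ the inequality $r=\phi^\star\ge J$ is immediate. When $\phi^\star=0$, \pref{eqn:ik_h} gives
\[
h^\star_a=\frac{1-f^\star(x,s,a)}{K-\sum_{a'}f^\star(x,s,a')}\le \frac{1}{K-M}=\frac{\theta}{M}-2\xi<\frac{\theta}{M}-\xi,
\]
so $G=0$ and the inequality again holds.

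\emph{Equality at the best action.} Let $a$ be the maximizing action and take $\phi^\star=1$. Then $h^\star_a=f^\star(x,s,a)/\sum_{a'}f^\star(x,s,a')\ge\theta/M$, which lies at or above the top of the ramp, so $G=1$. Combined with the $\phi^\star=0$ case above, this gives $J=\phi^\star$. The same computation underlies the third bullet of \pref{lem:decoder}.

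\textbf{Homogeneous states.} By \pref{lem:decoder}, $h^\star=\tfrac1K\mathbf 1$, so $\Delta=0\le\kappa/2$ and $J=c$. Also $\E[r]=f^\star(x,s,a)=c$ by \pref{asm:identify}.

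\textbf{Lipschitz bound.} Two facts drive the argument:
\begin{itemize}
\item $\Delta(\cdot)$ is $1$-Lipschitz in $\ell_\infty$ by the triangle inequality.
\item $v\mapsto G(v_a,\beta,\xi)$ is $\tfrac1\xi$-Lipschitz in $\ell_\infty$, since $|v_a-w_a|\le\|v-w\|_\infty$ and $G$ is a ramp of slope $1/\xi$ with values in $[0,1]$.
\end{itemize}
Write $J=c(1-\lambda(v))+\lambda(v)\,G(v)$, where $\lambda(v)=\min\{1,\max\{0,(2\Delta(v)-\kappa)/\kappa\}\}$. One checks that this formula agrees with all three cases of \pref{eqn:reward_decoder}, so $J$ is continuous at the boundaries $\Delta=\kappa/2$ and $\Delta=\kappa$. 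Moreover, $\lambda$ is a clipped linear function of $\Delta$ with slope $2/\kappa$, hence $\tfrac{2}{\kappa}$-Lipschitz.

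For any $v,w$, split the difference as
\[
J(v)-J(w)=(\lambda(v)-\lambda(w))\bigl(G(v)-c\bigr)+\lambda(w)\bigl(G(v)-G(w)\bigr).
\]
Using $|G-c|\le1$ and $\lambda\le1$,
\[
|J(v)-J(w)|\le\Bigl(\tfrac{2}{\kappa}+\tfrac1\xi\Bigr)\|v-w\|_\infty\le L\,\|v-w\|_\infty .
\]
The stated constant $4/\kappa$ leaves slack, which could absorb a cruder argument that bounds piecewise and uses the triangle inequality across regions.

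The main care points are confirming that $\xi>0$, i.e.\ $\theta/M>1/(K-M)$, which is exactly $\sigma>1$, and checking that the unified form of $J$ matches the three cases at the region boundaries.
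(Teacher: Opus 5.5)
Your proof is correct. The reward-matching part (underestimation at heterogeneous states, equality at the maximizing action, and the homogeneous case) is essentially identical to the paper's. Both invoke \pref{lem:decoder} to land in the third branch of \pref{eqn:reward_decoder}. Both use $h^\star_a\le\frac{1}{K-M}=\frac{\theta}{M}-2\xi$ when $\phi^\star=0$, and $h^\star_a\ge\frac{\theta}{M}$ at the best action when $\phi^\star=1$.

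Where you differ is the Lipschitz argument. The paper works region by region. Inside $\Delta\in[\kappa/2,\kappa)$ it writes $J=\mu_1(\Delta)c+\mu_2(\Delta)G$ and bounds three terms separately, getting $\frac{2c}{\kappa}+\frac{2}{\kappa}+\frac{1}{\xi}\le\frac{4}{\kappa}+\frac{1}{\xi}$. It then observes that $J$ is constant or equal to $G$ in the other two regions, and appeals to continuity at $\Delta=\kappa/2$ and $\Delta=\kappa$ to get a global bound.

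Your clipped weight $\lambda(v)$ makes $J=c(1-\lambda)+\lambda G$ valid for every $v$, so your two-term identity applies to any pair $v,w$ regardless of which regions they lie in. This removes the gluing step. Done carefully, that step needs a segment-splitting argument, e.g.\ using convexity of $t\mapsto\Delta((1-t)v+tw)$, which the paper leaves implicit.

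Your form also builds in that the two interpolation weights sum to one (the paper's $\mu_1=1-\mu_2$). So you bound $|G-c|\le 1$ jointly rather than bounding $c$ and $G$ separately, and obtain the sharper constant $\frac{2}{\kappa}+\frac{1}{\xi}$, which implies the stated $L$.
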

The proof is deferred to \pref{app: decoder}. A crucial consequence of \pref{lem:ambigurous_state_identify} is that the Lipschitz continuity of $J(v,a)$ enables a stable transfer from the true posterior $h^\star$ to its empirical estimate $\wh{h}$, ensuring that $J(\wh{h}(x,y,s),a)$ will be close to the true estimator $J(h^\star(x,y,s),a)$. In the next section, we leverage this reward estimator to design learning algorithms with provable performance guarantees.

\section{Online Policy Learning}\label{sec: regret}
Building upon the reward estimator $J(\wh{h}(x,y,s),a)$, we propose our algorithm for learning the optimal policy.
Following the recent framework of realizable contextual learning~\citep{foster2020beyond,foster2021efficient}, we assume access to an online square-loss regression oracle $\AlgSq$: at round $t$, \AlgSq predicts $\wh{f}_t$ from the convex hull of $\calF$, and then receives a tuple $(x_t, s_{t,H}, a_{t,H}, \wt{r}_t)$, where $\wt{r}_t \triangleq J(\wh{h}(x_t, y_t, s_{t,H}), a_{t,H})$ serves as the estimated reward, and incurs a squared loss $(\wh{f}_t(x_t, s_{t,H}, a_{t,H}) - \wt{r}_t)^2$.
We assume that $\AlgSq$ satisfies certain regret bound with respect to squared-loss against the best predictor in $\calF$.
\begin{assumption}[Bounded squared-loss regret]
\label{asm:reward_oracle}
\AlgSq guarantees that
for any sequence $\{(x_t, s_{t,H},a_{t,H},\wt{r}_t)\}_{t\in[T]}$, we have the following holds for certain $\RegSq$:
\begin{align*}
    \sum_{t=1}^T (\wh{f}_t(x_t, s_{t,H}, a_{t,H}) - \tilde{r}_t)^2 -\inf_{f \in \mathcal{F}} \sum_{t=1}^T (f(x_t, s_{t,H}, a_{t,H}) - \tilde{r}_t)^2 \le \RegSq.
\end{align*}
\end{assumption}
Many standard function classes admit such an oracle. For instance, for a finite class $\calF$, Vovk's aggregation algorithm achieves $\RegSq = \order(\log |\calF|)$~\citep{vovk1995game}. We refer the readers to \citet{foster2020beyond} for more examples.
We also define the reward estimator based on the ground-truth inverse kinematics $\underline{f}^\star(x,s,a) \triangleq \E_{y|x,s,a}[J(h^\star(x,y,s), a)]$.
By \pref{lem:ambigurous_state_identify}, this function serves as a valid lower bound on the true reward $f^\star(x,s,a)$.
Following the assumption in \citet{zhang2024efficient}, we also assume that $\underline{f}^\star\in\calF$.
\begin{assumption}[Lower bound realizability]\label{asm:realizability_lower}
   Assume that $\underline{f}^\star\in\calF$ where 
   $\underline{f}^\star(x, s, a) \triangleq  \E_{y \mid x,s,a} [J(h^\star(x,y,s), a)] $.
\end{assumption}

We now present our policy learning algorithm, \pref{alg:online_mdp} (deferred to \pref{app: policy_learning_alg}), which builds upon the OMG-CMDP! algorithm proposed by \citet{levy2023efficient}.
Designed for realizable contextual MDPs with direct reward feedback, OMG-CMDP! operates by maintaining estimates for both the reward and transition models via online regression oracles. Specifically, at each round $t$, the algorithm utilizes the history up to $t-1$ to approximate the rewards via a squared-loss oracle and dynamics via a log-loss oracle.
Upon observing the current context $x_t$, it solves an optimization problem to find an occupancy measure that maximizes the estimated reward subject to the estimated dynamics, regularized by a log-barrier term. Finally, the learner executes the policy induced by this optimal occupancy measure.

Compared to OMG-CMDP!, our algorithm incorporates two key modifications tailored to our setting: (1) instead of using a general log-loss regression oracle to estimate the transition, since we assume that the transition is independent of the context, we employ a simpler \textit{smoothed empirical mean estimator} constructed from online visitation counts to estimate the transition dynamics (\pref{line: transition_estimate}). We show in \pref{app:transition_estimation} that this estimator also enjoys a $\otil(S^2K)$ log-loss regret guarantee satisfying the requirements of the analysis in \citet{levy2023efficient}. (2) Since the true reward is latent, we utilize the learned reward estimator $J(\wh{h}(x,y,s),a)$ to generate proxy rewards and use them as the input for \AlgSq. To ensure accurate reward estimation, we use the reachable set $\widehat{\mathcal{S}}_{H,\epsilon}$ to filter the input tuples for \AlgSq: if $s_{t,H} \in \widehat{\mathcal{S}}_{H,\epsilon}$, we decode the feedback to obtain a proxy reward $\tilde{r}_t = J(\wh{h}(x_t, y_t, s_{t,H}), a_{t,H})$ and use the tuple $(x_t, s_{t,H}, a_{t,H}, \tilde{r}_t)$ to update the oracle (\pref{line: eqn_proxy} and \pref{line: update_oracle}); otherwise, we discard these episodes for reward learning. Since the total probability of visiting states outside $\widehat{\mathcal{S}}_{H,\epsilon}$ is bounded by $\mathcal{O}(TS\epsilon)$, ignoring them incurs negligible regret.

Our final full algorithm is shown in \pref{alg:igl_mdp}, which combines decoder learning and online policy learning,
and our main theoretical result is the following regret guarantee.

\setcounter{AlgoLine}{0}
\begin{algorithm}[t]
\caption{IGL for Contextual MDPs}
\label{alg:igl_mdp}
   \nl \textbf{Input:} function classes $\mathcal{F}$, $\Phi$, $\gamma > 0$, $\epsilon>0$, $N_0>0$, online regression oracle $\AlgSq$.

   \nl \For{each $s\in\calS_H$}{
        \nl Run \pref{alg:homing} with input $\delta = \frac{1}{T^2}$, $N$ defined in \pref{lem:ambigurous_state_identify}, and $s$, and output $\wh{\pi}_s$.
   }
   \nl Construct $\wh{\calS}_{H,\epsilon}$ defined in \pref{eqn:significant_state} using $\{\wh{\pi}_s\}_{s\in\calS_H}$.

   \nl Run \pref{alg:tuple_collection} with input $\wh{\calS}_{H,\epsilon}$, $\{\wh{\pi}_s\}_{s\in\wh{\calS}_{H,\epsilon}}$, and $N_0$, and obtain $\{D_s\}_{s\in\wh{\calS}_{H,\epsilon}}$.

   \nl Compute $\wh{h}_s$ for each $s\in\calS_{H,\eps}$ according to \pref{eq:erm_h} using $D_s$. Let $\wh{h}=(\wh{h}_s)_{s\in\wh{\calS}_{H,\epsilon}}$.

   \nl Run \pref{alg:online_mdp} with input $\wh{\calS}_{H,\eps}$, $\wh{h}$, $\calF$, $\gamma$, and \AlgSq for the remaining episodes.
   
\end{algorithm}

\begin{theorem}\label{thm:final_regret}
Under Assumptions \ref{asm:feedback}, \ref{asm:realize}, \ref{asm:identify}, \ref{asm:reward_oracle}, and \ref{asm:realizability_lower}, 
\pref{alg:igl_mdp} with certain choices of $\eps$, $\gamma$, and $N_0$ guarantees 
\begin{align*}
    \Reg=\otil\left(T^{\frac{3}{4}}S^{\frac{1}{2}}K^{\frac{1}{4}}H^{\frac{1}{4}}L^{\frac{1}{2}}+\sqrt{TSKH\RegSq}\right),
\end{align*}
where $L$ is defined in \pref{lem:ambigurous_state_identify}.
\end{theorem}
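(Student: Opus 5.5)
We bound the regret by splitting the $T$ episodes into the exploration phase (Steps 1--2 of \pref{alg:igl_mdp}) and the online phase (\pref{alg:online_mdp}). Throughout we condition on the good event in which \pref{lem:state_identify}, \pref{lem:identify_significant_state}, \pref{lem:sampling} and \pref{lem:erm} all hold. With $\delta=1/T^2$ this event fails with probability $\order(1/T^2)$, so its complement costs $\order(1)$ regret.

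\textbf{Exploration phase.} Step 1 uses $S\cdot N=\otil(S^2KH/\eps^2)$ episodes and Step 2 uses $\otil(S N_0/\eps)$ episodes by \pref{lem:sampling}. Each such episode contributes at most $1$ to the regret.

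\textbf{Online phase.} For any policy $\pi$ we write $V(\pi)=\E_x\sum_{s\in\calS_H}P^{\pi}(s)\,\E_{a\sim\pi(\cdot|x,s)}f^\star(x,s,a)$. By \pref{lem:identify_significant_state}, every $s\notin\wh{\calS}_{H,\eps}$ has $p_s^\star\le 6\eps$. Hence, for both $\pi^\star$ and $\pi_t$, the mass on excluded states changes the value by at most $6S\eps$ per episode, for a total of $\order(TS\eps)$.

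On $\wh{\calS}_{H,\eps}$ we compare $f^\star$ with the surrogate $\underline{f}^\star$. By \pref{lem:ambigurous_state_identify}, $\underline{f}^\star\le f^\star$ pointwise at heterogeneous states, equality holds at the greedy action $a^\star(x,s)$, and equality holds everywhere at homogeneous states. We use this to show that the optimal policy in the surrogate MDP is also optimal in the true MDP, with the same value. Indeed, the optimal terminal action at every state is $\argmax_a f^\star$, and its surrogate value coincides with $\max_a f^\star$. Moreover, $\underline{f}^\star\le f^\star$ gives $V_{\underline f^\star}(\pi_t)\le V(\pi_t)$. Therefore the online-phase regret is at most the regret measured under the surrogate reward $\underline{f}^\star$, which is realizable in $\calF$ by \pref{asm:realizability_lower}.

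The only remaining mismatch is that \AlgSq is fed $\tilde r_t=J(\wh h(x_t,y_t,s_{t,H}),a_{t,H})$ rather than samples whose conditional mean is $\underline f^\star$. We handle this by invoking the analysis of OMG-CMDP! from \citet{levy2023efficient}. Its transition log-loss regret is $\otil(S^2K)$ by \pref{app:transition_estimation}. For the reward part, we bound the oracle's excess squared error with respect to $\underline f^\star$ by
\[
\RegSq+\sum_t\bigl|\E[\tilde r_t\mid x_t,s_{t,H},a_{t,H}]-\underline f^\star(x_t,s_{t,H},a_{t,H})\bigr|.
\]
By the Lipschitz property in \pref{lem:ambigurous_state_identify}, the bias term is at most $L\,\E\|\wh h-h^\star\|_\infty$, with the expectation taken under the online action distribution.

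\textbf{Main obstacle: distribution shift.} \pref{lem:erm} only controls $\|\wh h-h^\star\|_2^2$ under $x\sim\calD$ and $a\sim q_{\unif}$, whereas online actions follow $\pi_t$. Two features help. First, $\wh h$ depends only on $(x,y,s)$, and by \pref{asm:feedback} the distribution of $y$ depends on $a$ only through $f^\star(x,s,a)$. Second, an importance-weighting argument gives a density ratio of at most $K$ between any action distribution and uniform. Together these yield $\E_{\pi_t}\|\wh h-h^\star\|_\infty\le\sqrt{K\log(S|\calH|T)/N_0}$ per state reached. Because the bias enters linearly, the total bias contribution is $T L\sqrt{K\log/N_0}$, times an $S$ factor if we sum crudely over states.

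The IGW/log-barrier analysis with parameter $\gamma$ then gives
\[
\frac{TSKH}{\gamma}+\gamma\bigl(\RegSq+\otil(S^2K)+TL\sqrt{K/N_0}\bigr).
\]
Optimizing over $\gamma$ produces the term $\sqrt{TSKH\RegSq}$ together with $\sqrt{TSKH\cdot TL\sqrt{K/N_0}}$.

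\textbf{Balancing.} We balance this last term against the exploration cost $SN_0/\eps$ and the truncation cost $TS\eps$. The choice $\eps=T^{-1/4}$ makes $S^2KH/\eps^2$ and $TS\eps$ of lower order, and tuning $N_0$ then yields the $T^{3/4}S^{1/2}K^{1/4}H^{1/4}L^{1/2}$ term. The exact exponents of $S$, $K$ and $H$ are bookkeeping that we would verify last. We expect the distribution-shift step to be the delicate part, together with correctly adapting the \citet{levy2023efficient} analysis to a biased reward oracle.
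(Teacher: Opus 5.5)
There is a genuine gap in how you handle the biased proxy reward, and it costs you the rate. You bound the cross term in the oracle's excess loss by the \emph{absolute} bias, $\calE_{\text{rew}} \lesssim \RegSq + \sum_t |\bar f_t - \underline{f}^\star(x_t,s_{t,H},a_{t,H})|$ with $\bar f_t \triangleq \E[\tilde r_t \mid x_t,s_{t,H},a_{t,H}]$. You then pay $L\sqrt{K\log/N_0}$ per round, so misspecification adds $TL\sqrt{K/N_0}$ to $\calE_{\text{rew}}$. This produces regret of order $T N_0^{-1/4}$ (ignoring $S,K,H,L$), both through $\min_\gamma\{TSKH/\gamma + \gamma\,\calE_{\text{rew}}\}$ and through the $\sqrt{TH\calE_{\text{rew}}}$ term of the OMG-CMDP! bound. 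The other $N_0$-dependent costs satisfy $SN_0/\eps + TS\eps \ge 2S\sqrt{TN_0}$. Balancing $TN_0^{-1/4}$ against $\sqrt{TN_0}$ forces $N_0 \asymp T^{2/3}$ and gives only $\otil(T^{5/6})$. Your stated tuning cannot work: with $\eps = T^{-1/4}$, making $TN_0^{-1/4} \le T^{3/4}$ requires $N_0 \gtrsim T$, and then the exploration cost is $SN_0/\eps \gtrsim T^{5/4}$.

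The missing idea is to keep the bias \emph{squared}, so that the squared-error guarantee of \pref{lem:erm} is used without taking a square root. Write the oracle guarantee on the rounds fed to \AlgSq as $\sum_t(\wh f_t-\underline{f}^\star)^2 + 2\sum_t(\wh f_t-\underline{f}^\star)(\underline{f}^\star-\tilde r_t) \le \RegSq$. Take the conditional expectation over $y_t$, which turns $\tilde r_t$ into $\bar f_t$ since $\wh f_t$ and $(x_t,s_{t,H},a_{t,H})$ are fixed before $y_t$ is drawn. Then lower bound the cross term using $ab \ge -\tfrac14 a^2 - b^2$; the $-\tfrac14 a^2$ piece is absorbed into the left-hand side, giving
\[
\calE_{\text{rew}} \le 2\RegSq + 4\,\E\Bigl[\sum_t \bigl(\underline{f}^\star(x_t,s_{t,H},a_{t,H}) - \bar f_t\bigr)^2\Bigr].
\]
Jensen, the $L$-Lipschitzness of $J$, and your density-ratio argument then bound each squared bias by $L^2K\,\E_{\unif}\|\wh h-h^\star\|_2^2 = \otil(KL^2/N_0)$. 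Misspecification therefore adds only $\otil(TKL^2/N_0)$ to $\calE_{\text{rew}}$, and the resulting regret terms scale as $TN_0^{-1/2}$. Balancing these against $S\sqrt{TN_0}$ with $\eps=\sqrt{N_0/T}$ and $N_0 \asymp \sqrt{T}$ (the paper takes $N_0 \approx L\sqrt{TKH}/S$) recovers the $T^{3/4}$ rate. The rest of your outline matches the paper: the phase decomposition, the truncation to $\wh{\calS}_{H,\eps}$, replacing $f^\star$ by $\underline{f}^\star$ via the lower bound with equality at the argmax, and the density-ratio transfer.
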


To our knowledge, this is the first sublinear regret guarantee for contextual MDPs with implicit personalized feedback. Our bound of $\otil(T^{3/4}+\sqrt{T\RegSq})$ is looser than the single-step result of $\otil(T^{2/3}+\sqrt{T\RegSq})$ proven in \citet{zhang2024provably}. This gap stems from the additional exploration required to learn the homing policy and collect sufficient samples from each state, which is not necessary in the single-step setting. Compared to \citet{levy2023efficient} where direct reward feedback is available, our approach incurs an additional $\otil(T^{3/4})$ overhead to handle the signal decoding. We leave the question of what the minimax regret bound is as an interesting future direction.

The full proof of this theorem is deferred to \pref{app:regret_analysis}. 
The primary analytical challenge lies in managing {reward misspecification}: our algorithm receives a learned proxy reward $J(\wh{h}(x,y,s),a)$ rather than the true underlying reward. We address this by leveraging the Lipschitz continuity of $J$, allowing us to bound the misspecification via the estimation error between the learned posterior $\wh{h}(x,y,s)$ and the true posterior $h^\star(x,y,s)$. This error is then controlled by the generalization guarantees of $\wh{h}$, which rely on the dataset providing sufficient coverage of the action space.

\section{Experiment}\label{sec: experiment}

\begin{figure}[t]
    \centering
    \begin{subfigure}{0.48\linewidth}
        \centering
        \includegraphics[width=\linewidth]{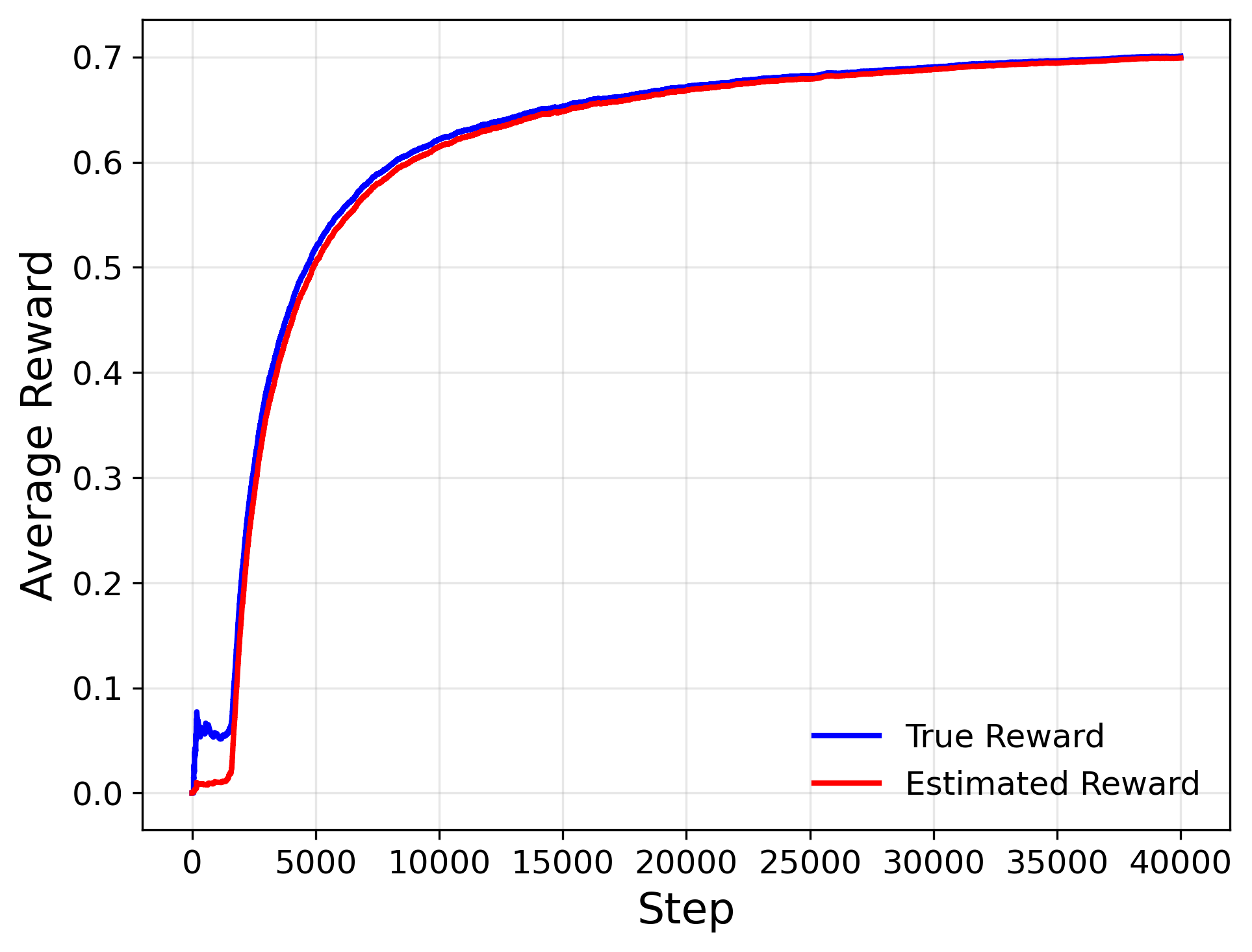}
        \caption{Synthetic MDP}
        \label{fig:reward}
    \end{subfigure}
    \hfill
    \begin{subfigure}{0.48\linewidth}
        \centering
        \includegraphics[width=\linewidth]{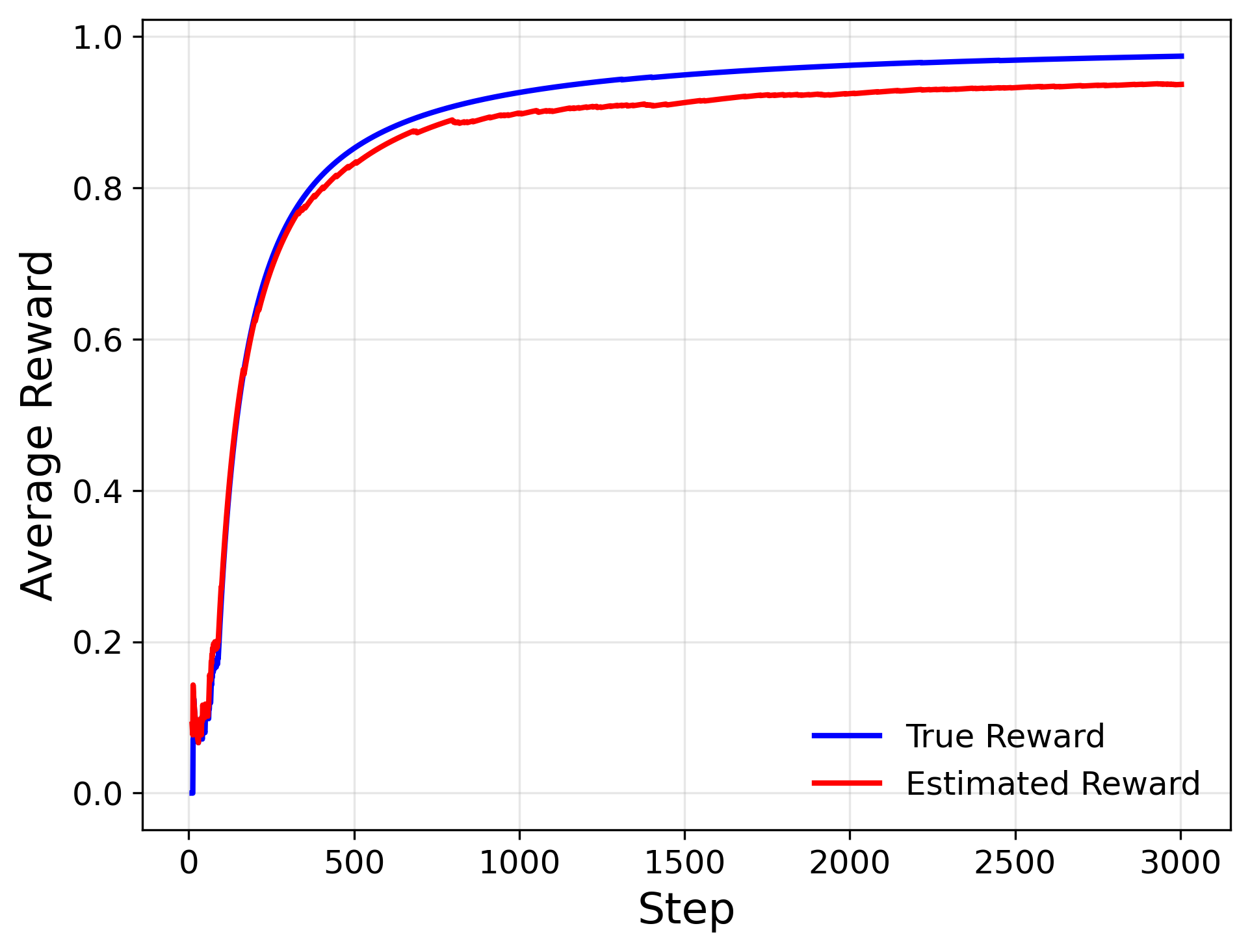}
        \caption{Real booking dataset}
        \label{fig:exp_real}
    \end{subfigure}
    \caption{Average reward during policy learning on synthetic and real datasets.}
    \label{fig:real_reward}
\end{figure}
 
\subsection{Experiments on Synthetic Datasets}\label{sec: synthe}

\paragraph{Dataset Construction.}
We construct a synthetic contextual episodic MDP with horizon $H=3$ and a binary context space $x_t \in \{\texttt{True}, \texttt{False}\}$.
At the start of each episode, the context is drawn with probabilities $\Pr(x_t=\texttt{True})=0.7$ and $\Pr(x_t=\texttt{False})=0.3$.
The state space is partitioned into layers $\calS_1=\{s_{1,g}\}$, $\calS_2=\{s_{2,g}, s_{2,b}\}$, and $\calS_3=\{s_{3,g}, s_{3,b}\}$, where indices $g$ and $b$ represent ``good'' and ``bad'' states, respectively.
The action set is $\calA=\{a_1, \dots, a_5\}$.
The dynamics and rewards are defined as follows:
\begin{itemize}[nosep, leftmargin=*]
    \item \textbf{Transitions ($h < 3$):} Transitions depend on the current state quality.
    From a good state $s_{h,g}$, action $a_1$ maintains the good state ($s_{h+1, g}$) with high probability $1-p$ and transitions to the bad state ($s_{h+1, b}$) with probability $p$.
    Actions $a_2, \dots, a_5$ have the reverse effect, transitioning to the bad state with probability $1-p$ and to the good state with probability $p$.
    From a bad state $s_{h,b}$, all actions deterministically transition to the next bad state $s_{h+1, b}$.
    \item \textbf{Rewards ($h=3$):} Rewards are binary and realized only at the final step.
    At the good terminal state $s_{3,g}$, action $a_1$ yields reward $1$ with probability $1-p_{\mathrm{reward}}$, while $a_2, \dots, a_5$ yield reward $1$ with probability $p_{\mathrm{reward}}$.
    At the bad terminal state $s_{3,b}$, the reward is deterministically $0$.
    \item \textbf{Feedback Signal:} The learner observes a flipped version of the reward depending on the context.
    Specifically, $y_t = r_t$ if $x_t=\texttt{True}$, and $y_t = 1-r_t$ if $x_t=\texttt{False}$.
\end{itemize}
In our experiments, we set parameters $p=p_{\mathrm{reward}}=0.1$.
Under this configuration, for either context, $s_{3,g}$ is a {heterogeneous state} with $\theta=0.9$ and $M=1.3$, while $s_{3,b}$ is a {homogeneous state} with constant reward $c=0$. More implementation details are deferred to \pref{app: synthe}.

\paragraph{Results} Our empirical results are summarized in \pref{fig:reward}, which plots the averaged cumulative reward over $T=40,000$ episodes of policy learning.
The blue curve depicts the performance evaluated against the unobserved true reward, while the red curve shows the performance as measured by our estimated reward decoder.
Two observations are as follows.
First, the decoded reward stays close to the true reward but remains always slightly lower.
This confirms the result in \pref{lem:decoder} that our estimator acts as a reward under-estimator.
Furthermore, the difference between the two curves vanishes at the end of training, showing that our decoder correctly recovers the true reward once the policy becomes optimal.
Second, as shown in \pref{fig:reward}, our algorithm steadily improves, approaching a final reward of $0.7$ (the optimal policy achieves reward $0.729$).
This consistent increase demonstrates that our method successfully optimizes the policy using only the decoded signal.

\subsection{Experiments on Real Datasets}\label{sec: real}
\paragraph{Dataset Construction.}
We constructed our dataset by merging two task-oriented dialogue corpora: the \texttt{dialog\_bAbI} dataset~\citep{bordes2016learning} for restaurant booking and a movie booking dialogue corpus~\citep{shah2018building}. The combined dataset consists of approximately 3,700 samples. We construct two-turn dialogues comprising a clarification query and a final booking action. Besides the optimal ground-truth actions, we employ an LLM to synthesize incorrect candidates including queries lacking sufficient detail and bookings containing inaccurate information. Generation details and the implementation details are deferred to~\pref{app: real_data} and~\pref{app: imple_real} respectively.

\paragraph{Results.} The performance on the real dataset is shown in~\pref{fig:exp_real}.
Note that the optimal reward is always $1$, corresponding to the correct booking action.
We observe that the average reward increases steadily during the training process, eventually reaching 0.95, which shows the effectiveness of our algorithm.
Similar to \pref{sec: synthe}, the estimator consistently serves as a lower bound for the ground truth reward, aligning well with our theoretical analysis.
\section{Conclusion and Future Works}\label{sec:conclusion}

In this paper, we studied Interaction-Grounded Learning with personalized feedback in contextual episodic MDPs with terminal rewards.
We proposed a two-stage algorithm that first learns a reward decoder via inverse kinematics, handling both significant and ambiguous states, and subsequently learns the optimal policy using these decoded signals.
We established theoretical regret guarantees, and our experiments on both synthetic and real-world data validate that our decoder accurately recovers ground-truth rewards, enabling near-optimal policy learning without direct supervision.
Future work includes extending our framework to context-dependent dynamics and generalizing the approach to handle intermediate personalized feedback.

\bibliography{ref}
\bibliographystyle{plainnat}

\newpage
\appendix

\section{Omitted Details in \pref{sec: decoder}}\label{app: decoder}
In this section, we present the omitted details in \pref{sec: decoder}, including omitted algorithm descriptions in \pref{app: decoder_alg} and omitted proofs in \pref{app: decoder_proof}.

\subsection{Omitted Algorithms in \pref{sec: decoder}}\label{app: decoder_alg}

We first show the pseudo code that computes a policy that approximates the homing policy to a given state $s\in S_H$ introduced in \pref{sec: reachability}.

\begin{algorithm}[h]
\caption{Homing policy learning for state $s$}
\label{alg:homing}
\textbf{Input:} failure probability $\delta\in(0,1)$, number of episodes $N$, target state $s\in\calS_H$.

Define a reward function $R:\calS_H\to\{0,1\}$ by $ R(s) \;=\; \mathbbm{1}\{s'=s\}$ for all $a\in \calA$ and $0$ otherwise.

Run \textsc{Euler} on the induced episodic MDP with reward function $R$,
horizon $H$, and failure probability $\delta$ for $N$ episodes.

Let $\Psi(s)=\{\pi^{(1)},\pi^{(2)},\ldots,\pi^{(N)}\}$ denote the set of policies executed by \textsc{Euler} during these episodes where we set the strategy of the final state $s$ to be $\pi^{(n)}(s)=\frac{1}{K}\mathbf{1}$ for all $n\in[N]$.

\textbf{Output.}
Output policy $\wh{\pi}_s$ defined as the uniform mixture over $\Psi(s)$.
\end{algorithm}

Next, we show the pseudo code for collecting context-action-signal tuples introduced in \pref{sec: ik_learning}.

\begin{algorithm}[h]
\caption{Tuple Collection}
\label{alg:tuple_collection}
\textbf{Input:} State set $\wh{\calS}_{H,\eps} $, policy set $\{\wh{\pi}_s\}_{s\in \wh{\calS}_{H,\eps} }$, sample size $N_0$.

\For{each $s\in \wh{\calS}_{H,\eps} $}{
    Initialize $n=0$ and dataset $D_s=\emptyset$.

    \While{$n \le N_0$}{
        Start a new episode and observe a context, denoted as $x_{s,n}$.
        
        Run policy $\wh{\pi}_s$, and observe tuple $(s_n,a_{s,n},y_{s,n})$ for the final state, action, and feedback for this episode. 

        \If{$s_n=s$}{
            $D_s \leftarrow D_s \cup \{(x_{s,n}, s, a_{s,n}, y_{s,n})\}$, $n \leftarrow n+1$
        }
    }
}
\textbf{Output:} $D \triangleq \bigcup_{s\in \wh{\calS}_{H,\eps} } D_s$.
\end{algorithm}

\subsection{Omitted Proofs in \pref{sec: decoder}}\label{app: decoder_proof}

In this section, we show the omitted proofs in \pref{sec: decoder}. 
The first lemma shows that using \pref{alg:homing}, we are able to obtain a policy whose probability of reaching $s$ is close to the maximum achievable among all policies.

\stateIdentify*
\begin{proof}
    According to Lemma 3.4 of \citet{jin2020reward}, we know that by applying \textsc{Euler}~\citep{EULER2019}, with probability at least $1-\delta$, after $N=\frac{C\cdot S^2KH\log(SKH/\delta)}{\epsilon^2}$ rounds for some constant $C\geq 1$, for each state $s\in \calS_H$, we are able to obtain a policy such that the probability of visiting $s$ is no smaller than $p^\star_s-\epsilon$. Specifically, for each state, we obtain a set of policy $\Phi_s=\{\pi_{s,1},\dots, \pi_{s,N}\}$ such that when $\wh{\pi}_s$ is a uniform policy over $\Phi_s$, we have
    \begin{align*}
    P^{\wh{\pi}_s}(s)=\frac{1}{N}\sum_{n=1}^NP^{\pi_{s,n}}(s)\geq p_s^\star-\epsilon.
    \end{align*}
\end{proof}

The next lemma shows that we are able to reliably distinguish between states with high and low maximum reaching probability based
solely on the observable empirical visitation frequency $\wh{p}_s$ for each $s\in\calS_H$.

\identifySignificant*
\begin{proof}
Let $\calE_1$ be the event from~\pref{lem:state_identify} that for all $s\in\calS_H$,
$P^{\wh{\pi}_s}(s)\ge p_s^\star-\eps$, so $\Pr(\mathcal{E}_1)\ge 1-\delta$. Let $\mathcal{E}_2$ be the
the event that for all $s\in\calS_H$, $|\wh{p}_s-P^{\wh{\pi}_s}(s)|\le \beta$. Standard Hoeffding inequality shows that $\Pr(\mathcal{E}_2)\ge 1-\delta$.

Suppose that $\mathcal{E}_1$ and $\mathcal{E}_2$ hold. If $\wh{p}_s\geq \tau+\beta$, then $P^{\wh{\pi}_s}(s)\ge \tau$ holds directly based on $\mathcal{E}_2$ and by definition of $p_s^\star$, we know that $p_s^\star \geq P^{\wh{\pi}_s}(s)\geq \tau$. For (ii), if $\wh{p}_s\leq \tau-(\beta+\epsilon)$, according to \pref{lem:state_identify}, we know that
\begin{align*}
    p_s^\star \leq P^{\wh{\pi}_s}(s)+\eps \leq \wh{p}_s + \eps+\beta \leq \tau,
\end{align*}
where the first inequality is due to $\calE_1$ and the second inequality is due to $\calE_2$.
\end{proof}

The following lemma is an analog of Lemma 1 in \citet{zhang2024provably} in the context of MDP.
\posterior*
\begin{proof}
Direct calculation shows that
     \begin{align*}
    \Pr[a|x,y,s] &= \frac{\Pr[a|x,s]\cdot \Pr[y|x,a,s]}{\Pr[y|x,s]} \\
    &= \Pr[a|x,s]\frac{\Pr[r=1|x,a,s] \cdot\Pr[y|x,a,s,r=1] + \Pr[r=0|x,a,s] \cdot\Pr[y|x,a,s,r=0]}{\Pr[y|x,s]} \\
    &= \Pr[a|x,s]\frac{f^\star(x,s,a) \cdot\Pr[y|x,s,r=1] + (1-f^\star(x,s,a)) \cdot\Pr[y|x,s,r=0]}{\Pr[y|x,s]} \\
    &=\Pr[a|x,s]\frac{f^\star(x,s,a) \cdot\Pr[r=1|x,y,s]}{\Pr[r=1|x,s]} + \Pr[a|x,s]\frac{(1-f^\star(x,s,a)) \cdot\Pr[r=0|x,y,s]}{\Pr[r=0|x,s]} \\
    &=\Pr[a|x,s]\frac{f^\star(x,s,a) \cdot\Pr[r=1|x,y,s]}{\sum_{a'=1}^K\Pr[a'|x,s]\Pr[r=1|x,s,a']} + \Pr[a|x,s]\frac{(1-f^\star(x,s,a)) \cdot\Pr[r=0|x,y,s]}{\sum_{a'=1}^K\Pr[a'|x,s]\Pr[r=0|x,s,a']} \\
    &=\Pr[a|x,s]\frac{f^\star(x,s,a) \cdot\phi^\star(x,y,s)}{\sum_{a'=1}^K\Pr[a'|x,s]f^\star(x,s,a')} + \Pr[a|x,s]\frac{(1-f^\star(x,s,a)) \cdot(1-\phi^\star(x,y,s))}{\sum_{a'=1}^K\Pr[a'|x,s](1-f^\star(x,s,a'))},
\end{align*}
where the third equality uses the fact that $y$ is independent of the action given $x$, $s$, and $r$. Since $\Pr[a|x,s]=\frac{1}{K}$ for all $a\in[K]$ given any $x\in\calX$ and $s\in\calS_H$, we know that
\begin{align*}
    \Pr[a|x,y,s]=\frac{f^\star(x,s,a) \cdot\phi^\star(x,y,s)}{\sum_{a'=1}^Kf^\star(x,s,a')} + \frac{(1-f^\star(x,s,a)) (1-\phi^\star(x,y,s))}{K-\sum_{a'=1}^Kf^\star(x,s,a')}.
\end{align*}   
\end{proof}

The next lemma shows how we are able to interpret the true reward from the ground-truth posterior for both homogeneous and heterogeneous states.
\ikbinary*
\begin{proof}
Fix any context $x\in\calX$ and heterogeneous state $s\in\calS_H$. For conciseness, we denote
$f_a \triangleq f^\star(x,s,a)$, $Q \triangleq \sum_{a=1}^K f_a \le M$, and
$a^\star \in \arg\max_{a\in[K]} f_a$, so that $f_{a^\star}\ge \theta$.
We lower bound $\bigl\|h^\star(x,y,s)-\tfrac{1}{K}\mathbf{1}\bigr\|_\infty$ in two cases.

\paragraph{Case 1: $\phi^\star(x,y,s)=1$.}
By \pref{eqn:ik_h}, we know that $h_a^\star(x,y,s)=\frac{f_a}{Q}$ for all $a\in[K]$. Moreover, according to \pref{asm:identify}, we know that $K\theta -M >K\theta-\sigma M = \theta M>0$. Hence,
\begin{align*}
\left\|h^\star(x,y,s)-\tfrac{1}{K}\mathbf{1}\right\|_\infty
&\ge h_{a^\star}^\star(x,y,s)-\tfrac{1}{K}
= \frac{f_{a^\star}}{Q}-\frac{1}{K}
\ge \frac{\theta}{Q}-\frac{1}{K}
\ge \frac{\theta}{M}-\frac{1}{K}
= \frac{K\theta-M}{KM}.
\end{align*}
Since $M<\frac{K}{2}$, we have $K-M>M$, and therefore
$\frac{K\theta-M}{KM}\ge \frac{K\theta-M}{K(K-M)}$.

\paragraph{Case 2: $\phi^\star(x,y,s)=0$.}
By \pref{eqn:ik_h}, $h_a^\star(x,y,s)=\frac{1-f_a}{K-Q}$ for all $a\in[K]$. In particular,
\begin{align*}
\left\|h^\star(x,y,s)-\tfrac{1}{K}\mathbf{1}\right\|_\infty
&\ge \left|\tfrac{1}{K}-h_{a^\star}^\star(x,y,s)\right|
\geq \frac{1}{K}-\frac{1-f_{a^\star}}{K-Q}
\ge \frac{1}{K}-\frac{1-\theta}{K-Q}
= \frac{K\theta-Q}{K(K-Q)}.
\end{align*}
The function $u\mapsto \frac{K\theta-u}{K(K-u)}$ is decreasing for $u\in(0,K\theta)$ (since $\theta\le 1$),
and thus using $Q\le M<K\theta$ yields
\[
\frac{K\theta-Q}{K(K-Q)} \ge \frac{K\theta-M}{K(K-M)}.
\]

Combining the two cases, we obtain
\[
\left\|h^\star(x,y,s)-\tfrac{1}{K}\mathbf{1}\right\|_\infty
\ge \frac{K\theta-M}{K(K-M)},
\]
which proves the first statement for heterogeneous states.

The remaining two statements for heterogeneous states follow the proof of Lemma 2 in \citet{zhang2024provably}. Specifically, if $h_a^\star(x,y,s)\ge \frac{\theta}{M}$, then we must have $\phi^\star(x,y,s)=1$ because otherwise,
\[
h_a^\star(x,y,s)=\frac{1-f_a}{K-Q}\le \frac{1}{K-M}< \frac{\theta}{M},
\]
where the first inequality uses the fact that $Q\leq M$ and $f_a\geq 0$, and the last inequality is by \pref{asm:identify}. Moreover, for $a=\argmax_{a'\in[K]}f^\star(x,s,a)$, we know that
$f^\star(x,s,a)\ge\theta$. Therefore, when $\phi^\star(x,y,s)=1$,
\[
h_{a}^\star(x,y,s)=\frac{f^\star(x,s,a)}{\sum_{a'=1}^K f^\star(x,s,a')}
\ge \frac{\theta}{M}.
\]

Finally, if $s$ is a homogeneous state, then direct calculation shows that for all $a\in[K]$,
\[
h_a^\star(x,y,s)=\frac{1}{K}\bigl(\phi^\star(x,y,s)+1-\phi^\star(x,y,s)\bigr)=\frac{1}{K},
\]
and hence $\left\|h^\star(x,y,s)-\tfrac{1}{K}\mathbf{1}\right\|_\infty=0$.
\end{proof}

\decoderProperty*
\begin{proof}
    The proof follows a similar spirit as the proof of \pref{lem:decoder}. Fix any context $x\in\calX$, state $s\in\calS_H$, action $a\in[K]$, and feedback $y$.
Recall $\Delta(v)= \|v-\tfrac{1}{K}\mathbf{1}\|_\infty$ and $\xi=\frac{1}{2}(\frac{\theta}{M}-\frac{1}{K-M})$.

When $s$ is a heterogeneous state, by \pref{lem:decoder}, we know that $\Delta(h^\star(x,y,s))\ge \kappa$.
Hence, by \pref{eqn:reward_decoder},
\[
J(h^\star(x,y,s),a)= G\left(h_a^\star(x,y,s),\frac{\theta}{M}-\xi,\xi\right).
\]
We then consider two cases.
\paragraph{Case 1: $\phi^\star(x,y,s)=1$.}
Then $r(x,s,a)=\phi^\star(x,y,s)=1$. Since $G\left(h_a^\star(x,y,s),\frac{\theta}{M}-\xi,\xi\right)\in[0,1]$, we know that
\[
r(x,s,a)=1 \ge J(h^\star(x,y,s),a).
\]
Moreover, if $a=\argmax_{a'\in[K]}f^\star(x,s,a')$, then \pref{lem:decoder} shows that
$h_{a}^\star(x,y,s)\ge \tfrac{\theta}{M}$. This means that $$G\left(h_a^\star(x,y,s),\frac{\theta}{M}-\xi,\xi\right) = \frac{h_a^\star(x,y,s)-\frac{\theta}{M}+\xi}{\xi}\cdot\mathbbm{1}\left\{h_a^\star(x,y,s)\in\Big[\frac{\theta}{M}-\xi,\frac{\theta}{M}\Big)\right\}+\mathbbm{1}\left\{h_a^\star(x,y,s)\geq \frac{\theta}{M}\right\}=1,$$
and therefore $
J(h^\star(x,y,s),a)
=G\left(h_a^\star(x,y,s),\frac{\theta}{M}-\xi,\xi\right)
=1
=r(x,s,a).$

\paragraph{Case 2: $\phi^\star(x,y,s)=0$.}
Then $r(x,s,a)=0$. By \pref{eqn:ik_h}, we know that
$h_a^\star(x,y,s)=\frac{1-f^\star(x,s,a)}{K-\sum_{a'} f^\star(x,s,a')}\le \frac{1}{K-M}$.
Under \pref{asm:identify}, we have
$\frac{1}{K-M}\le \tfrac{\theta}{M}-\xi$, so $h_a^\star(x,y,s)\le \tfrac{\theta}{M}-\xi$ and hence
$G\left(h_a^\star(x,y,s),\frac{\theta}{M}-\xi,\xi\right)=0$. Thus
\[
J(h^\star(x,y,s),a)=G\left(h_a^\star(x,y,s),\frac{\theta}{M}-\xi,\xi\right)=0=r(x,s,a).
\]
Combining both cases proves the first statement for heterogeneous states. When $a=\argmax_{a'\in[K]}f^\star(x,s,a)$, we only need to show that when $\phi^\star(x,y,s)=1$, $J^\star(h^\star(x,y,s),a)=1$. Since $f^\star(x,s,a)\geq \theta$ according to \pref{asm:identify},  we know that
\begin{align*}
    h_a^\star(x,y,s) = \frac{f^\star(x,s,a)}{\sum_{a'=1}^Kf^\star(x,s,a')}\geq \frac{\theta}{M},
\end{align*}
meaning that 
\begin{align*}
J^\star(h^\star(x,y,s),a) = G\left(h_a^\star(x,y,s),\frac{\theta}{M}-\xi,\xi\right) = 1 = \phi^\star(x,y,s).
\end{align*}

When $s$ is a homogeneous state, then by \pref{lem:decoder}, we know that $h^\star(x,y,s)=\tfrac{1}{K}\mathbf{1}$. Hence,
$\Delta(h^\star(x,y,s))=0\le \kappa/2$. Therefore by \pref{eqn:reward_decoder}, we know that
$J(h^\star(x,y,s),a)=c$, which equals to $\E[r(x,s,a)]$ by the definition of homogeneous states.

To show that $J(v,a)$ is Lipschitz in $v$ with respect to $\ell_\infty$ norm, we fix any $a\in[K]$ and let $v,v'\in\Delta_K$. By the triangular inequality, we know that
\begin{align}\label{eq:Delta_lip}
|\Delta(v)-\Delta(v')|\le \|v-v'\|_\infty.
\end{align}

When $\Delta(v)$ and $\Delta(v')$ are both in $[\frac{\kappa}{2},\kappa)$, we can write
\[
J(v,a)=\mu_1(\Delta(v))\cdot c+\mu_2(\Delta(v))\cdot G\left(v_a,\frac{\theta}{M}-\xi,\xi\right),
\]
where $\mu_1(u)=\frac{2(\kappa-u)}{\kappa},\mu_2(u)=\frac{2u-\kappa}{\kappa}$. Then for any $u,u'\in[\kappa/2,\kappa]$,
\begin{align}\label{eqn:Lip_aux}
|\mu_1(u)-\mu_1(u')|=|\mu_2(u)-\mu_2(u')|\le \frac{2}{\kappa}|u-u'|.
\end{align}
Therefore, direct calculation shows that
\begin{align*}
    \left|J(v,a)-J(v',a)\right| &\leq c\cdot \left|\mu_1(\Delta(v))-\mu_1(\Delta(v'))\right| + G\left(v_a,\frac{\theta}{M}-\xi,\xi\right)\cdot \left|\mu_2(\Delta(v))-\mu_2(\Delta(v'))\right| \\
    &\qquad + \mu_2(\Delta(v'))\left|G\left(v_a,\frac{\theta}{M}-\xi,\xi\right) - G\left(v_a',\frac{\theta}{M}-\xi,\xi\right)\right| \\
    &\leq \frac{2c\|v-v'\|_\infty}{\kappa} + \frac{2\|v-v'\|_\infty}{\kappa} + \frac{\|v-v'\|_\infty}{\xi} \\
    &\leq \left(\frac{4}{\kappa}+\frac{1}{\xi}\right)\|v-v'\|_\infty, \tag{since $|c|\leq 1$}
\end{align*}
where the second inequality uses \pref{eq:Delta_lip}, \pref{eqn:Lip_aux}, and the fact that
\begin{equation}\label{eq:Gtilde_lip}
\left|G\left(v_a,\tfrac{\theta}{M}-\xi,\xi\right) - G\left(v_a',\tfrac{\theta}{M}-\xi,\xi\right)\right|
\le \frac{1}{\xi}|v_a-v'_a|
\le \frac{\|v-v'\|_\infty}{\xi}.
\end{equation}

Since on the other two regions, $J(v,a)$ is either constant when $\Delta(v)\le \kappa/2$ or equals $G(v_a,\frac{\theta}{M}-\xi,\xi)$ when $\Delta(v)\ge \kappa$, $J(v,a)$ is also $\frac{1}{\xi}$ in the other two regions. Finally, since $J$ is continuous at $\Delta=\kappa/2$ and $\Delta=\kappa$, we know that $J(v,a)$ is Lipschitz over the whole domain with Lipschitz constant $L=\max\{\frac{4}{\kappa}+\frac{1}{\xi},\frac{1}{\xi}\} = \frac{4}{\kappa}+\frac{1}{\xi}$.
\end{proof}
\section{Omitted Details in \pref{sec: regret}}
In this section, we provide omitted details in \pref{sec: regret}, including the algorithm for policy learning in \pref{app: policy_learning_alg}, the log-loss regression regret guarantees for empirical transition estimation in \pref{app:transition_estimation}, and the proof of our main theorem \pref{thm:final_regret} in \pref{app:regret_analysis}.

\subsection{Policy Learning Algorithm}\label{app: policy_learning_alg}
In this subsection, we provide the algorithm description of IGL policy learning for contextual MDPs. The pseudo code is shown in \pref{alg:online_mdp}.

\setcounter{AlgoLine}{0}
\begin{algorithm}[ht]
\caption{IGL Policy Learning for Contextual MDPs}
\label{alg:online_mdp}
   \nl \textbf{Input:} state set $\widehat{\mathcal{S}}_{H,\epsilon}$, decoder $\wh{h}$, function classes $\mathcal{F}$, $\gamma > 0$, online regression oracle $\AlgSq$.
   
   \nl \textbf{Initialize:} $N_1(s,a) \leftarrow 0$ and $N_1(s,a,s') \leftarrow 0$ for all $h \in [H-1]$, $(s,a) \in \Scal_h \times \Acal, s' \in \Scal_{h+1}$.
   
   \nl \For{$t=1, \dots, T$}{
       \nl Receive context $x_t$ and $\wh{f}_t$ from \AlgSq.

    \nl Estimate the transition as $\wh{P}_t(s' \mid s,a) = \frac{N_t(s,a,s') + 1}{N_t(s,a) + |\Scal_{h+1}|}$ for all $h \in [H-1], (s,a) \in \Scal_h \times \Acal, s' \in \Scal_{h+1}$. \label{line: transition_estimate}
       
       \nl Solve for occupancy measure $\wh{q}_t$: \label{line: opt}
       \begin{equation*}
       \begin{split}
           \wh{q}_t = \operatorname*{argmax}_{q \in \mu(\wh{P}_t)} \Big\{  \sum_{h, s,a} q_h(s,a) \wh{f}_t(x_t, s, a)  + \frac{1}{\gamma} \sum_{h,s,a} \log q_h(s,a) \Big\},
       \end{split}
       \end{equation*}
       where $\mu(P) \subseteq [0,1]^{H \times S \times A}$ is the set of all valid occupancy measures defined by the transition $P$.
       
       \nl For all $h \in [H]$ and $(s,a) \in \Scal_h \times \Acal$, set policy as:
       $$
       \pi_{t}(a \mid s)=\frac{\wh{q}_{t,h}(s,a)}{\sum_{a'=1}^K \wh{q}_{t,h}(s,a')}.
       $$
       
       \nl Execute $\pi_t$, get trajectory $\tau_t=(s_{t,1},a_{t,1},s_{t,2},\dots,s_{t,H},a_{t,H})$ and feedback $y_t$.

       \nl \If{$s_{t,H} \in \widehat{\mathcal{S}}_{H,\epsilon}$}{
           \nl $\tilde{r}_t = J(\wh{h}(x_t, y_t, s_{t,H}), a_{t,H})$. \label{line: eqn_proxy}

           \nl Update $\AlgSq$ with $(x_t, s_{t,H}, a_{t,H}, \tilde{r}_t)$. \label{line: update_oracle}
        }

       \nl \For{$h=1,2,\dots, H-1$}{
            \For{each $(s,a,s')\in \calS_h\times \calA\times \calS_{h+1}$}{
                \begin{align}
                    N_{t+1}(s,a) &= N_t(s,a)+\mathbbm{1}\{s=s_{t,h},a=a_{t,h}\},\label{eqn:sa}\\
                    N_{t+1}(s,a,s') &= N_t(s,a,s')+\mathbbm{1}\{s=s_{t,h},a=a_{t,h},s'=s_{t,h+1}\}.\label{eqn:sas}
                \end{align}
            }
       }
   }
\end{algorithm}

\subsection{Guarantee for Empirical Transition Estimation}
\label{app:transition_estimation}

In this subsection, we provide the performance guarantee for estimating the transition kernel using smoothed empirical mean shown in \pref{line: transition_estimate} of \pref{alg:online_mdp}. Specifically, we show that this smoothed empirical mean enjoys a $\otil(S^2K)$ log-loss regression regret guarantee, which satisfies the requirements of the analysis in \citet{levy2023efficient}.

\begin{theorem}[Log-loss guarantee for Laplace-smoothed transition estimator]
\label{thm:transition_logloss}
Let $\{\widehat P_t\}_{t=1}^T$ be the sequence of Laplace-smoothed estimators defined in \pref{line: transition_estimate} and recall that $\calP$ is the true transition kernel.
Then, the cumulative log-loss regression regret with respect to $\calP$ is bounded by:
\begin{equation*}
\sum_{t=1}^T \sum_{h=1}^{H-1} \left( -\log \widehat P_{t}(s_{t,h+1} \mid s_{t,h}, a_{t,h}) \right)
-
\sum_{t=1}^T \sum_{h=1}^{H-1} \left( -\log \calP(s_{t,h+1} \mid s_{t,h}, a_{t,h}) \right)
\le
S^2K\log(TH+S),
\end{equation*}
\end{theorem}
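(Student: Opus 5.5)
The plan is to decompose the cumulative log-loss regret by $(s,a)$ pair and then invoke the classical regret bound for the Laplace (add-one) estimator in sequential prediction of a categorical variable. For each layer $h\in[H-1]$ and each pair $(s,a)\in\calS_h\times\calA$, consider the subsequence of time steps at which $(s_{t,h},a_{t,h})=(s,a)$. Since the layers are disjoint, each visit to $(s,a)$ happens at a unique layer. Along this subsequence, the next states $s_{t,h+1}\in\calS_{h+1}$ form a sequence over an alphabet of size $m\triangleq|\calS_{h+1}|$. The estimator in \pref{line: transition_estimate} is exactly the Laplace estimator built from the counts of previous visits to $(s,a)$, because the counts $N_t(s,a,\cdot)$ in \pref{eqn:sa} and \pref{eqn:sas} change only when $(s,a)$ is visited. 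The total regret therefore splits as a sum over $(s,a)$ of per-pair regrets. Each per-pair regret compares the Laplace predictions against the fixed distribution $\calP(\cdot\mid s,a)$ on that pair's own subsequence.

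Next, fix a pair with $n$ visits and let $n_{s'}$ count how often $s'$ follows it. The product of the Laplace probabilities telescopes exactly:
\begin{align*}
\prod_{i=1}^{n}\wh P(\text{$i$-th outcome})=\frac{\prod_{s'}n_{s'}!}{m(m+1)\cdots(m+n-1)}=\frac{(m-1)!\,\prod_{s'}n_{s'}!}{(n+m-1)!}.
\end{align*}
The comparator is bounded by the best fixed distribution in hindsight: $\prod_{s'}\calP(s'\mid s,a)^{n_{s'}}\le\prod_{s'}(n_{s'}/n)^{n_{s'}}$. The per-pair regret is thus at most $\log\bigl[\binom{n+m-1}{m-1}\cdot \frac{n!}{\prod n_{s'}!}\prod (n_{s'}/n)^{n_{s'}}\bigr]$. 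The multinomial times the maximum-likelihood probability is a probability of a particular type, so it is at most $1$. This leaves the bound $\log\binom{n+m-1}{m-1}\le (m-1)\log(n+m-1)\le m\log(TH+S)$, using $n\le TH$ and $m\le S$. This argument holds for every realized sequence, so no probabilistic reasoning is needed. In particular, the adaptivity of the policies $\pi_t$ is irrelevant, and it is not even necessary that the data come from $\calP$.

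Finally, sum over pairs. The number of pairs is $\sum_h |\calS_h|K\le SK$, and each contributes at most $|\calS_{h+1}|\log(TH+S)\le S\log(TH+S)$. This gives the claimed total of $S^2K\log(TH+S)$. Pairs that are never visited contribute zero.

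The main obstacle is the middle step: showing the per-pair regret is at most $(m-1)\log(n+m-1)$. The delicate parts are the exact telescoping of the add-one predictions and the inequality that the multinomial coefficient times the type's maximum-likelihood probability is at most $1$. I would state both explicitly. The remaining bookkeeping, the decomposition and summation, is routine.
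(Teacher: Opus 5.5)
Your proposal is correct and follows essentially the same route as the paper. Both arguments decompose by $(s,a)$ pair and use the exact Laplace-product identity $\frac{(m-1)!\prod n_{s'}!}{(n+m-1)!}$. Both split this into a multinomial coefficient, killed by the type-probability argument, and a binomial coefficient bounded polynomially, and both use Gibbs' inequality (nonnegativity of KL) to replace $\calP(\cdot\mid s,a)$ by the empirical distribution. Your final summation (at most $SK$ pairs, each contributing at most $S\log(TH+S)$) is actually a bit cleaner than the paper's, which writes $S_{h+1}^2$ where $S_hS_{h+1}$ is meant; both reach $S^2K\log(TH+S)$.
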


Before proving \pref{thm:transition_logloss}, we first provide two helpful auxiliary lemmas.

\begin{lemma}
\label{lem:dirichlet_identity}
Fix a layer $h\in[H-1]$ and state-action pair $(s,a)\in \calS_h\times\calA$. Consider the subsequence of global time indices $t_1, \dots, t_B\in [T]$ such that $(s_{t_b, h}, a_{t_b, h}) = (s,a)$ for all $b\in[B]$.
Let $z_b \triangleq s_{t_b,h+1} \in \mathcal{S}_{h+1}$ be the next state observed at time $t_b$.
Let $n_{s'}$ denote the total number of times the outcome $s'\in\calS_{h+1}$ occurs in $\{(s_{t,1},a_{t,1},\dots,s_{t,H},a_{t,H})\}_{t\in[T]}$.
Then the product of the sequential predictive probabilities satisfies:
\begin{align*}
    \prod_{b=1}^B \wh{P}_{t_b}(z_b \mid s,a)
=
\frac{(S_{h+1}-1)!\,\prod_{s'\in\mathcal{S}_{h+1}} n_{s'}!}{(S_{h+1}+B-1)!},
\end{align*}
where we denote $|\calS_h|=S_h$ for all $h\in[H]$.
\end{lemma}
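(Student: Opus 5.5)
This is the Laplace (add-one) rule of succession, i.e.\ the Dirichlet$(1,\dots,1)$ predictive. We prove the identity by writing each predictive probability explicitly in terms of counts restricted to the subsequence, and then telescoping the product. Here $n_{s'}$ is the number of indices $b\in[B]$ with $z_b=s'$, so $\sum_{s'} n_{s'}=B$.

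First we check what the counters hold at each visit. The counters $N_t(s,a)$ and $N_t(s,a,s')$ are updated only at the end of round $t$ via \pref{eqn:sa} and \pref{eqn:sas}. Since each trajectory visits layer $h$ exactly once, $(s,a)$ occurs at most once per episode at layer $h$. Hence at time $t_b$ we have $N_{t_b}(s,a)=b-1$, and $N_{t_b}(s,a,s')=m_{b-1}(s')\triangleq|\{b'<b: z_{b'}=s'\}|$. By \pref{line: transition_estimate},
\[
\wh P_{t_b}(z_b\mid s,a)=\frac{m_{b-1}(z_b)+1}{b-1+S_{h+1}}.
\]

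Next we take the product over $b=1,\dots,B$.

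The denominators give $\prod_{b=1}^B (S_{h+1}+b-1)=\frac{(S_{h+1}+B-1)!}{(S_{h+1}-1)!}$.

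For the numerators, group the factors by outcome $s'$. The $j$-th occurrence of $s'$ in the subsequence, for $j=1,\dots,n_{s'}$, contributes $m+1=j$, because exactly $j-1$ earlier occurrences precede it. So the numerators for $s'$ multiply to $n_{s'}!$, and the full numerator product is $\prod_{s'} n_{s'}!$. This order-invariance is exactly what makes the product depend only on the final counts.

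Combining the two gives the claimed formula. The only delicate point is the bookkeeping: confirming that counters are updated after prediction and that the layered structure forbids repeated visits within an episode. The rest is routine.
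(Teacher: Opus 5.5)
Your proof is correct and takes the same route as the paper. The paper also uses $N_{t_b}(s,a)=b-1$ to telescope the denominators to $(S_{h+1}+B-1)!/(S_{h+1}-1)!$, and groups the numerators by outcome to get $\prod_{s'} n_{s'}!$. Your reading of $n_{s'}$ as the number of $b\in[B]$ with $z_b=s'$, rather than the statement's literal count of all occurrences of $s'$ across trajectories, is the right one and is exactly how the paper's proof uses it.
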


\begin{proof}
Recall the estimator definition: $\wh{P}_{t_b}(z_b \mid s,a) = \frac{N_{t_b}(s,a,z_b) + 1}{N_{t_b}(s,a) + S_{h+1}}$.
We analyze the numerators and denominators of $\prod_{b=1}^B \wh{P}_{t_b}(z_b \mid s,a)$ separately.

\textbf{Denominator.} Note that $N_{t_b}(s,a)$ counts the visitations to $(s,a)$ prior to time $t_b$. Thus, as $b$ goes from $1$ to $B$, $N_{t_b}(s,a)$ takes values $0, 1, \dots, B-1$. The product of denominators is:
\[
\prod_{b=1}^B (N_{t_b}(s,a) + S_{h+1}) = \prod_{k=0}^{B-1} (S_{h+1} + k) = \frac{(S_{h+1}+B-1)!}{(S_{h+1}-1)!}.
\]

\textbf{Numerator.} We regroup the product terms based on the outcome state $s'$. For a fixed state $s'\in\calS_{h+1}$, the term $(N_{t_b}(s,a,s') + 1)$ appears exactly $n_{s'}$ times. Across these occurrences, the count $N_{t_b}(s,a,s')$ increments sequentially from $0$ to $n_{s'}-1$. Thus, the contribution from state $s'$ is:
\[
\prod_{k=0}^{n_{s'}-1} (k + 1) = n_{s'}!.
\]
Taking the product over all $s' \in \mathcal{S}_{h+1}$ yields $\prod_{s'\in\calS_{h+1}} n_{s'}!$.

Combining the calculation of the numerator and the denominator concludes the proof.
\end{proof}

\begin{lemma}
\label{lem:entropy_bound}
Fix any $h \in [H-1]$ and let $S_{h+1} \coloneqq |\mathcal{S}_{h+1}|$. Let $\{n_{s'}\}_{s' \in \mathcal{S}_{h+1}}$ be a set of non-negative integers such that $\sum_{s'} n_{s'} = B$. Define the distribution $q \in \Delta_{\mathcal{S}_{h+1}}$ such that $q(s') \triangleq n_{s'}/B$ for all $s'\in\calS_{h+1}$. Then, the following inequality holds:
\[
    \log \frac{(S_{h+1} + B - 1)!}{(S_{h+1} - 1)! \prod_{s'} n_{s'}!}
    \;\le\;
    -B \sum_{s' \in \mathcal{S}_{h+1}} q(s') \log q(s') + (S_{h+1} - 1)\log(S_{h+1} + B).
\]
\end{lemma}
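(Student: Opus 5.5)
Write $S' \triangleq S_{h+1}$ throughout. The plan is to split the left-hand side into a multinomial-coefficient term and a "stars-and-bars" term:
\[
\frac{(S'+B-1)!}{(S'-1)!\prod_{s'} n_{s'}!} \;=\; \frac{B!}{\prod_{s'} n_{s'}!}\cdot\frac{(S'+B-1)!}{(S'-1)!\,B!} \;=\; \binom{B}{(n_{s'})_{s'}}\binom{S'+B-1}{S'-1}.
\]
Each factor is then bounded separately, and the logarithms are added.

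\textbf{Multinomial factor.} I would use the standard method-of-types bound $\binom{B}{(n_{s'})}\le \exp(B\,\mathrm{H}(q))$, where $\mathrm{H}(q)=-\sum_{s'} q(s')\log q(s')$ is the Shannon entropy of $q$. To verify it, expand by the multinomial theorem:
\[
1=\Bigl(\sum_{s'}q(s')\Bigr)^B \;\ge\; \binom{B}{(n_{s'})}\prod_{s'} q(s')^{n_{s'}},
\]
keeping only the single term indexed by $(n_{s'})$. Since $\prod_{s'} q(s')^{n_{s'}}=\exp\bigl(B\sum_{s'} q(s')\log q(s')\bigr)=\exp(-B\,\mathrm{H}(q))$, the bound follows. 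This gives exactly the $-B\sum_{s'}q(s')\log q(s')$ term, with the convention $0\log 0=0$ covering any $n_{s'}=0$.

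\textbf{Counting factor.} It remains to show $\binom{S'+B-1}{S'-1}\le (S'+B)^{S'-1}$. Write the coefficient as
\[
\binom{S'+B-1}{S'-1}=\prod_{j=1}^{S'-1}\frac{B+j}{j}\le \prod_{j=1}^{S'-1}(B+j)\le (S'+B)^{S'-1},
\]
which is immediate. Taking logarithms and summing the two bounds yields the claimed inequality.

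The only delicate point is the edge case $B=0$. There $q$ is undefined, but the left-hand side equals $\log 1=0$, so the inequality holds trivially with the convention that the entropy term is zero. I do not expect any genuine obstacle; the main care needed is presenting the type-class bound cleanly.
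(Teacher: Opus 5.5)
Your proposal is correct and matches the paper's proof: you split the quantity into the multinomial coefficient $B!/\prod_{s'} n_{s'}!$ times $\binom{S'+B-1}{S'-1}$, bound the first by keeping a single term of the multinomial expansion of $(\sum_{s'} q(s'))^B$, and bound the second by $(S'+B)^{S'-1}$, which your explicit product argument justifies cleanly. Your non-strict inequality and your $B=0$ remark are, if anything, more careful than the paper, which writes a strict bound on the binomial that fails when $S_{h+1}=1$.
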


\begin{proof}
We decompose the term inside the logarithm into the product of a multinomial coefficient and a binomial coefficient:
\[
\frac{(S_{h+1}+B-1)!}{(S_{h+1}-1)!\,\prod_{s'} n_{s'}!}
=
\underbrace{\frac{B!}{\prod_{s'} n_{s'}!}}_{\textsc{Term-A}}
\cdot
\underbrace{\binom{S_{h+1}+B-1}{S_{h+1}-1}}_{\textsc{Term-B}}.
\]

\textbf{Bounding \textsc{Term-A}.} Consider the multinomial expansion of $(\sum_{s'} x_{s'})^B$ with $x_{s'} = q(s') = n_{s'}/B$. Since $\sum_{s'} q(s') = 1$, we have:
\[
1 = \left(\sum_{s'} q(s')\right)^B = \sum_{k_1+\dots+k_{S_{h+1}}=B} \frac{B!}{\prod_{s'} k_{s'}!} \prod_{s'} q(s')^{k_{s'}}.
\]
This sum is lower bounded by the single term where the exponents match the counts (i.e., $k_{s'} = n_{s'}$):
\[
1 \ge \frac{B!}{\prod_{s'} n_{s'}!} \prod_{s'} q(s')^{n_{s'}}.
\]
Taking the logarithm and rearranging terms yields:
\[
\log (\textsc{Term-A}) \le - \sum_{s'} n_{s'} \log q(s') = B \sum_{s'} -\frac{n_{s'}}{B} \log q(s') = -B\sum_{s'}q(s')\log q(s').
\]

\textbf{Bounding \textsc{Term-B}.} Using the inequality $\binom{n}{k} \le (n+1)^k$:
\[
\binom{S_{h+1}+B-1}{S_{h+1}-1} < (S_{h+1}+B)^{S_{h+1}-1}.
\]
Taking the logarithm gives $\log(\textsc{Term-B}) < (S_{h+1}-1)\log(S_{h+1}+B)$. Summing the bounds for \textsc{Term-A} and \textsc{Term-B} concludes the proof.
\end{proof}

With these technical lemmas established, we are now ready to present the main theorem regarding the log-loss regret of our estimated transition model.

\begin{proof}[Proof of \protect\pref{thm:transition_logloss}]
We partition the transitions according to the state-action pairs. Fix a layer $h \in [H-1]$ and a state-action pair $(s,a) \in \mathcal{S}_h \times \mathcal{A}$.
Let $t_1, \dots, t_{m_{s,a}}$ be the subsequence of episode indices where $(s_{t_j, h}, a_{t_j, h}) = (s,a)$, and let $z_j \triangleq s_{t_j, h+1} \in \mathcal{S}_{h+1}$ be the observed next states. Let $S_{h+1}=|\calS_{h+1}|$ and $q(s') \coloneqq n_{s'}/m_{s,a}$ be the empirical distribution of outcomes. Then, following \pref{lem:dirichlet_identity} and \pref{lem:entropy_bound}, we can bound $\sum_{j=1}^{m_{s,a}} \left( -\log \widehat P_{t_j}(z_j \mid s,a) \right) - \sum_{j=1}^{m_{s,a}} \left( -\log \calP(z_j \mid s,a) \right)$ as follows:
\begin{align*}
    & \sum_{j=1}^{m_{s,a}} \left( -\log \widehat P_{t_j}(z_j \mid s,a) \right) - \sum_{j=1}^{m_{s,a}} \left( -\log \calP(z_j \mid s,a) \right) \\
    &= -\log \left( \prod_{j=1}^{m_{s,a}} \widehat P_{t_j}(z_j \mid s,a) \right) + \sum_{s' \in \mathcal{S}_{h+1}} n_{s'} \log \calP(s' \mid s,a) 
     \\
    &= \log \left( \frac{(S_{h+1}+m_{s,a}-1)!}{(S_{h+1}-1)!\,\prod_{s'} n_{s'}!} \right) + m_{s,a} \sum_{s'} q(s') \log \calP(s' \mid s,a) 
    \tag{by \pref{lem:dirichlet_identity} and  $n_{s'} = m_{s,a} q(s') $} \\
    &\le \left[ -m_{s,a} \sum_{s'} q(s') \log q(s') + C_{s,a} \right] + m_{s,a} \sum_{s'} q(s') \log \calP(s'|s,a) 
    \tag{By \pref{lem:entropy_bound}} \\
    &= -m_{s,a} \sum_{s'} q(s') \left( \log q(s') - \log \calP(s'|s,a) \right) + C_{s,a} 
    \tag{rearranging terms} \\
    &\le (S_{h+1}-1)\log(S_{h+1} + m_{s,a}),
    \tag{since KL-distance is non-negative}
\end{align*}
where we define $C_{s,a}\triangleq (S_{h+1}-1)\log(S_{h+1}+m_{s,a})$.

Taking summation over all state-action pairs and all layers, we know that
\begin{align*}
    &\sum_{t=1}^T \sum_{h=1}^{H-1} \left( -\log \widehat P_{t}(s_{t,h+1} \mid s_{t,h}, a_{t,h}) \right)
-
\sum_{t=1}^T \sum_{h=1}^{H-1} \left( -\log \calP(s_{t,h+1} \mid s_{t,h}, a_{t,h}) \right) \\
&=\sum_{h=1}^{H-1}\sum_{(s,a)\in\calS_h\times\calA}\sum_{j=1}^{m_{s,a}} \left( -\log \widehat P_{t_j}(z_j \mid s,a) \right) - \sum_{h=1}^{H-1}\sum_{(s,a)\in\calS_h\times\calA}\sum_{j=1}^{m_{s,a}} \left( -\log \calP(z_j \mid s,a) \right) \\
&\leq \sum_{h=1}^{H-1}S_{h+1}^2K\log(S+TH) \tag{since $m_{s,a}\leq TH$ and $S_{h+1}\leq S$}\\
&\leq \left(\sum_{h=1}^{H-1}S_{h+1}\right)^2K\log(S+TH) \leq S^2K\log(S+TH).
\end{align*}
\end{proof}

\subsection{Proof of \pref{thm:final_regret}}
\label{app:regret_analysis}
In this subsection, we prove our main result, \pref{thm:final_regret}. We set the confidence parameter $\delta = 1/T^2$ for the high-probability bounds in \pref{sec: decoder} (Step 1 and Step 2). Consequently, the expected regret from failure events is bounded by $T \cdot \delta = \mathcal{O}(1/T)$. In the remainder of this analysis, the expectation is taken on the condition that these high-probability events hold.

We decompose the total expected regret into three components: the regret incurred during the exploration phases (\pref{alg:homing} and \pref{alg:tuple_collection}), the regret accumulated during the online policy learning phase (\pref{alg:online_mdp}), and the regret attributable to failure events. Letting $T_{\text{start}}$ denote the first episode of the online phase, we write:
\begin{equation}
    \Reg = \underbrace{\E\left[\sum_{t=1}^{T_{\text{start}}-1} (V^\star - V(\pi_t))\right]}_{\Reg_{\textsc{explore}}} + \underbrace{\E\left[\sum_{t=T_{\text{start}}}^{T} (V^\star - V(\pi_t))\right]}_{\Reg_{\textsc{online}}} + \mathcal{O}(1), \nonumber
\end{equation}
where $\mathcal{O}(1)$ bounds the regret incurred under failure events. In the following subsections, we bound these two terms separately.

\subsubsection{Bounding $\Reg_{\textsc{explore}}$}
The exploration phase $[T_{\text{start}}-1]$ consists of two parts: learning the homing policy for each state $s\in\calS_H$ using \pref{alg:homing} and collecting tuples for reachable states $s\in\wh{\calS}_{H,\epsilon}$ using \pref{alg:tuple_collection}.

For the first part, according to \pref{lem:state_identify}, under the high-probability events, learning homing policies for all $s \in \mathcal{S}_H$ using \pref{alg:homing} requires $\otil(\frac{S^2 K H}{\epsilon^2})$ episodes, leading to an $\otil(\frac{S^2 K H}{\epsilon^2})$ regret.

For the second part, according to \pref{lem:sampling}, under the high-probability events, collecting $N_0$ samples for all reachable states $s\in \widehat{\mathcal{S}}_{H,\epsilon}$ using \pref{alg:tuple_collection} requires $ \otil(\frac{S N_0}{\epsilon})$ episodes, leading to an $\otil(\frac{S N_0}{\epsilon})$ regret.
Thus, $\Reg_{\text{explore}}$ is bounded as:
\begin{equation}
\label{eq:bound_explore}
    \Reg_{\textsc{explore}} \le \otil\left( \frac{S^2 K H}{\epsilon^2} + \frac{S N_0}{\epsilon} \right).
\end{equation}

\subsubsection{Bounding $\Reg_{\textsc{online}}$}

We analyze the regret by transferring the problem to a surrogate MDP restricted to the reachable states. We recall the definition of the surrogate function $\underline{f}^\star$ as:
\[
\underline{f}^\star(x, s, a) \triangleq  \E_{y \mid x,s,a} [J(h^\star(x,y,s), a)].
\]
We define the value of a policy $\pi$ under an expected reward function $f$ as $V(\pi; f) \triangleq \E_{\pi}[ f(x, s_{H}^{\pi}, a_{H}^{\pi})]$ where $(s_H^{\pi},a_H^{\pi})$ is sampled from the marginal distribution at step $H$ induced by $\pi$. Then, we bound the online regret as follows:
\begin{align}
    \Reg_{\textsc{online}}
    &= \E\left[\sum_{t=T_{\text{start}}}^{T} \left( V(\pi^\star; f^\star) - V(\pi_t; f^\star) \right)\right] \nonumber \\
    &= \E\left[\sum_{t=T_{\text{start}}}^{T} \left( {f}^\star(x_t,s_{t,H}^{\pi^\star},a_{t,H}^{\pi^\star}) - {f}^\star(x_t,s_{t,H},a_{t,H})\right) \mathbbm{1}\{s_{t,H}\in\wh{\calS}_{H,\epsilon}\}\right] \nonumber \\
    &\qquad +\E\left[\sum_{t=T_{\text{start}}}^{T} \left( {f}^\star(x_t,s_{t,H}^{\pi^\star},a_{t,H}^{\pi^\star}) - {f}^\star(x_t,s_{t,H},a_{t,H})\right) \mathbbm{1}\{s_{t,H}\notin\wh{\calS}_{H,\epsilon}\}\right]\nonumber\\
    &\leq \E\left[\sum_{t=T_{\text{start}}}^{T} \left( {f}^\star(x_t,s_{t,H}^{\pi^\star},a_{t,H}^{\pi^\star}) - {f}^\star(x_t,s_{t,H},a_{t,H})\right) \mathbbm{1}\{s_{t,H}\in\wh{\calS}_{H,\epsilon}\}\right] + \order(TS\epsilon)  \nonumber\\
    &\leq \E\left[\sum_{t=T_{\text{start}}}^{T} \left( \underline{f}^\star(x_t,s_{t,H}^{\pi^\star},a_{t,H}^{\pi^\star}) - \underline{f}^\star(x_t,s_{t,H},a_{t,H})\right) \mathbbm{1}\{s_{t,H}\in\wh{\calS}_{H,\epsilon}\}\right] + \order(TS\epsilon) ,\label{eq:online_1}
\end{align}
where the first inequality holds since $\calP(s)\leq 6\epsilon$ for all $s\in\wh{\calS}_{H,\epsilon}$ according to \pref{lem:identify_significant_state} and the second inequality is due to \pref{lem:ambigurous_state_identify} as for any $x\in\calX$, $s\in\calS_H$, and $a\in[K]$,
\begin{align*}
{f}^\star(x,s,a) = \E_r[r(x,s,a)] \geq \E_y[J(h^\star(x,y,s),a)] =  \underline{f}^\star(x,s,a),
\end{align*}
and when $a=\argmax_{a'\in[K]}f^\star(x,s,a)$,
\begin{align*}
    {f}^\star(x,s,a) = \E_r[r(x,s,a)] = \E_y[J(h^\star(x,y,s),a)] =  \underline{f}^\star(x,s,a).
\end{align*}

We define the cumulative expected estimation errors for the reward and dynamics as follows:
\begin{align*}
    \mathcal{E}_{\text{rew}} &\triangleq  \mathbb{E} \left[ \sum_{t=T_{\text{start}}}^{T}\left(\wh{f}_t(x_t, s_{t,H}, a_{t,H}) - \underline{f}^\star(x_t, s_{t,H}, a_{t,H})\right)^2\cdot \mathbbm{1}\{s_{t,H}\in\wh{\calS}_{H,\epsilon}\} \right], \\
    \mathcal{E}_{\text{dyn}} &\triangleq \mathbb{E}\left[\sum_{t=T_{\text{start}}}^T \sum_{h=1}^{H-1} \left( -\log \widehat P_{t}(s_{t,h+1} \mid s_{t,h}, a_{t,h}) + \log \calP(s_{t,h+1} \mid s_{t,h}, a_{t,h}) \right) \right].
\end{align*}
Following the proof of Theorem 5 in \citet{levy2023efficient} and Lemma 9 in \citet{levy2023efficient}, the regret on the surrogate MDP is bounded by:
\begin{align*}
     &\E\left[\sum_{t=T_{\text{start}}}^{T} \left( \underline{f}^\star(x_t,s_{t,H}^{\pi^\star},a_{t,H}^{\pi^\star}) - \underline{f}^\star(x_t,s_{t,H},a_{t,H})\right) \mathbbm{1}\{s_{t,H}\in\wh{\calS}_{H,\epsilon}\}\right]\\
    &\le \tilde{O}\left(\frac{HSKT}{\gamma}+\gamma \mathcal{E}_{\text{rew}} + \gamma H^4(\mathcal{E}_{\text{dyn}}+H)+\sqrt{TH(\mathcal{E}_{\text{dyn}}+H)}+\sqrt{TH \mathcal{E}_{\text{rew}}}\right).
\end{align*}
Substituting the bound $\mathcal{E}_{\text{dyn}} \le \otil (S^2 K)$ according to~\pref{thm:transition_logloss}, we further obtain that:
\begin{align}
    &\E\left[\sum_{t=T_{\text{start}}}^{T} \left( \underline{f}^\star(x_t,s_{t,H}^{\pi^\star},a_{t,H}^{\pi^\star}) - \underline{f}^\star(x_t,s_{t,H},a_{t,H})\right) \mathbbm{1}\{s_{t,H}\in\wh{\calS}_{H,\epsilon}\}\right] \nonumber\\
    &\le \tilde{O}\left(\frac{HSKT}{\gamma}+\gamma \mathcal{E}_{\text{rew}} + \gamma H^4 S^2 K + \sqrt{TH S^2 K} + \sqrt{TH \mathcal{E}_{\text{rew}}}\right).\label{eqn:gamma_bound}
\end{align}

Next, we bound the reward estimation error $\calE_{\text{rew}}$. Let $\calT_{\text{reach}} \triangleq \{t \in \{T_{\text{start}},T_{\text{start}}+1,\dots,T\} : s_{t,H} \in \wh{\calS}_{H,\epsilon}\}$ denote the set of time indices where the reachable states are visited. For brevity, let $\zeta_t \triangleq (x_t, s_{t,H}, a_{t,H})$. We also define the expected proxy reward as $\bar{f}(\zeta_t) \triangleq \E_{y_t}[J(\wh{h}(x_t,y_t,s_{t,H}))] \cdot \mathbbm{1}\{s_{t,H} \in \wh{\calS}_{H,\epsilon}\}$.

Invoking the regression oracle guarantee (Assumptions~\ref{asm:reward_oracle} and~\ref{asm:realizability_lower}) over the subsequence $\calT_{\text{reach}}$, we have:
\begin{equation}
\label{eq:oracle_application}
    \sum_{t\in\calT_{\text{reach}}} (\wh{f}_t(\zeta_t) - \tilde{r}_t)^2 - \sum_{t\in\calT_{\text{reach}}} (\underline{f}^\star(\zeta_t) - \tilde{r}_t)^2 \le \RegSq.
\end{equation}
Using the identity $(a-b)^2 - (c-b)^2 = (a-c)^2 + 2(a-c)(c-b)$ with $a=\wh{f}_t(\zeta_t)$, $b=\tilde{r}_t$, and $c=\underline{f}^\star(\zeta_t)$, we can rewrite \pref{eq:oracle_application} as follows:
\begin{align}\label{eq:oracle_application_1}
\sum_{t\in\calT_{\text{reach}}} (\wh{f}_t(\zeta_t) - \underline{f}^\star(\zeta_t))^2 + 2 \sum_{t\in\calT_{\text{reach}}} (\wh{f}_t(\zeta_t) - \underline{f}^\star(\zeta_t))(\underline{f}^\star(\zeta_t) - \tilde{r}_t) \le \RegSq.
\end{align}
We now take the conditional expectation with respect to the feedback $y_t$ given $\zeta_t$. Since $\wh{f}_t$ is determined by the history up to $\zeta_t$ and $\underline{f}^\star$ is independent of the history up to $\zeta_t$, and noting that $\E_{y_t}[\tilde{r}_t \mid \zeta_t] = \bar{f}(\zeta_t)$, the expected cross-term satisfies that:
\begin{align}
    \E_{y_t} \left[ (\wh{f}_t(\zeta_t) - \underline{f}^\star(\zeta_t))(\underline{f}^\star(\zeta_t) - \tilde{r}_t) \right]
    &= (\wh{f}_t(\zeta_t) - \underline{f}^\star(\zeta_t))(\underline{f}^\star(\zeta_t) - \E_{y_t}[\tilde{r}_t]) \nonumber \\
    &= (\wh{f}_t(\zeta_t) - \underline{f}^\star(\zeta_t))(\underline{f}^\star(\zeta_t) - \bar{f}(\zeta_t)). \label{eq:cross_term_exp}
\end{align}
Finally, we lower bound this term using the AM-GM inequality variation $ab \ge -\frac{1}{4}a^2 - b^2$ (by setting $a = \wh{f}_t(\zeta_t) - \underline{f}^\star(\zeta_t)$ and $y = \underline{f}^\star(\zeta_t) - \bar{f}(\zeta_t)$):
\begin{align}\label{eq:oracle_application_2}
(\wh{f}_t(\zeta_t) - \underline{f}^\star(\zeta_t))\cdot (\underline{f}^\star(\zeta_t) - \bar{f}(\zeta_t))
\ge
-\frac{1}{4}(\wh{f}_t(\zeta_t) - \underline{f}^\star(\zeta_t))^2 - (\underline{f}^\star(\zeta_t) - \bar{f}(\zeta_t))^2.
\end{align}
Taking the total expectation and rearranging the terms, we can obtain that:
\begin{align*}
    \mathcal{E}_{\text{rew}} &=\E\left[\sum_{t\in\calT_{\text{reach}}}(\wh{f}_t(\zeta_t)-\underline{f}^\star(\zeta_t))^2\right] \\
    &\leq \RegSq - 2\E\left[\sum_{t\in\calT_{\text{reach}}} (\wh{f}_t(\zeta_t) - \underline{f}^\star(\zeta_t))(\underline{f}^\star(\zeta_t) - \tilde{r}_t)\right] \tag{according to \pref{eq:oracle_application_1}}\\
    &= \RegSq + \frac{1}{2}\E\left[\sum_{t\in\calT_{\text{reach}}} (\wh{f}_t(\zeta_t) - \underline{f}^\star(\zeta_t))^2 \right] + 2\E\left[\sum_{t\in\calT_{\text{reach}}} (\underline{f}^\star(\zeta_t) - \bar{f}(\zeta_t))^2 \right]\tag{according to \pref{eq:oracle_application_2}} \\
    &= \RegSq + \frac{1}{2}\calE_{\text{rew}} + 2\E\left[\sum_{t\in\calT_{\text{reach}}} (\underline{f}^\star(\zeta_t) - \bar{f}(\zeta_t))^2 \right].
\end{align*}
Rearranging the terms leads to the following:
\begin{equation}\label{eq:rearranging}
    \mathcal{E}_{\text{rew}} \le 2 \RegSq + 4  \E \left[\sum_{t\in\calT_{\text{reach}}} (\underline{f}^\star(\zeta_t) - \bar{f}(\zeta_t))^2 \right].
\end{equation}

We now bound the second term on the RHS of \pref{eq:rearranging}. Using the definitions of $\underline{f}^\star$ and $\bar{f}$, we apply Jensen's inequality and the $L$-Lipschitz property of $J(\cdot, a_{t,H})$ with respect to the $\ell_\infty$-norm:
\begin{align}
    (\underline{f}^\star(\zeta_t) - \bar{f}(\zeta_t))^2 \cdot \mathbbm{1}\{s_{t,H} \in \widehat{\mathcal{S}}_{H,\epsilon}\}
    &= \mathbbm{1}\{s_{t,H} \in \widehat{\mathcal{S}}_{H,\epsilon}\} \cdot \left( \E_{y_t}[J(h^\star(\zeta_t'), y_t) - J(\wh{h}(\zeta_t'), y_t)] \right)^2 \nonumber \\
    &\le \mathbbm{1}\{s_{t,H} \in \widehat{\mathcal{S}}_{H,\epsilon}\} \cdot \E_{y_t} \left[ (J(h^\star(\zeta_t'), y_t) - J(\wh{h}(\zeta_t'), y_t))^2 \right] \nonumber \\
    &\le L^2 \E_{y_t} \left[ \|h^\star(\zeta_t') - \wh{h}(\zeta_t')\|_\infty^2 \right] \cdot \mathbbm{1}\{s_{t,H} \in \widehat{\mathcal{S}}_{H,\epsilon}\}, \nonumber
\end{align}
where we used the shorthand $\zeta_t' \triangleq (x_t, y_t, s_{t,H})$. 

To invoke \pref{lem:erm}, we relate the online policy $\pi_t$ to the uniform exploration policy $q_{\text{unif}}$. As the density ratio is bounded by $K$, we know that
\begin{align}
    \E \left[ \sum_{t\in\calT_{\text{reach}}} (\underline{f}^\star(\zeta_t) - \bar{f}(\zeta_t))^2 \right]
    &\le L^2 \E_{x_t \sim \mathcal{D}, \pi_t} \left[\sum_{t\in\calT_{\text{reach}}} \E_{y_t} [\|h^\star(\zeta_t') - \wh{h}(\zeta_t')\|_\infty^2] \right] \nonumber \\
    &\le L^2  K \cdot \E_{\substack{x \sim \mathcal{D} \\ a \sim q_{\text{unif}}}} \left[ \sum_{t\in\calT_{\text{reach}}}\E_{y} \left[\|h^\star(x, y, s_{t,H}) - \wh{h}(x, y, s_{t,H})\|_2^2\right] \right]. \nonumber
\end{align}
By \pref{lem:erm}, we know that 
\begin{align*}
    \E_{\substack{x \sim \mathcal{D} \\ a \sim q_{\text{unif}}}} \left[ \sum_{t\in\calT_{\text{reach}}}\E_{y} \left[\|h^\star(x, y, s_{t,H}) - \wh{h}(x, y, s_{t,H})\|_2^2\right] \right]\leq \otil\left(\frac{T}{N_0}\right).
\end{align*}
Plugging the above back to \pref{eq:rearranging}, we conclude that
\begin{equation}
\mathcal{E}_{\text{rew}} \le 2 \RegSq + \otil\left( \frac{T K L^2}{N_0} \right). \nonumber
\end{equation}

Substituting the bound for $\mathcal{E}_{\text{rew}}$ back into \pref{eqn:gamma_bound}, we obtain that

\begin{align}
    &\E\left[\sum_{t=T_{\text{start}}}^{T} \left( \underline{f}^\star(x_t,s_{t,H}^{\pi^\star},a_{t,H}^{\pi^\star}) - \underline{f}^\star(x_t,s_{t,H},a_{t,H})\right) \mathbbm{1}\{s_{t,H}\in\wh{\calS}_{H,\epsilon}\}\right] \nonumber\\
    &\le \tilde{O}\left(\frac{HSKT}{\gamma}+\gamma \mathcal{E}_{\text{rew}} + \gamma H^4 S^2 K + \sqrt{TH S^2 K} + \sqrt{TH \mathcal{E}_{\text{rew}}}\right)\nonumber\\
    &\le \tilde{O}\left(\frac{HSKT}{\gamma}+\gamma\RegSq +
 \frac{\gamma TKL^2}{N_0}+ \gamma H^4 S^2 K + \sqrt{TH S^2 K} + \sqrt{TH \RegSq} + TL\sqrt{\frac{KH}{N_0}}\right).\label{eq:online_2}
\end{align}

\paragraph{Final Bound.}
Combining the exploration regret bound (\pref{eq:bound_explore}) with the online regret bounds (\pref{eq:online_1}) and (\pref{eq:online_2}), the total regret is bounded by:
\begin{align*}
    \Reg &\le  \otil\left( \frac{S^2 K H}{\epsilon^2} + \frac{S N_0}{\epsilon} \right) + \otil(T S \epsilon) \\
    &\qquad +\tilde{O}\left(\frac{HSKT}{\gamma}+\gamma\RegSq +
 \frac{\gamma TKL^2}{N_0}+ \gamma H^4 S^2 K + \sqrt{TH S^2 K} + \sqrt{TH \RegSq} + TL\sqrt{\frac{KH}{N_0}}\right)\\
 &=\otil\left(T^{\frac{3}{4}}S^{\frac{1}{2}}K^{\frac{1}{4}}H^{\frac{1}{4}}L^{\frac{1}{2}}+T^{\frac{2}{3}}S^{\frac{4}{3}}K^{\frac{1}{3}}H^{\frac{1}{3}}+\sqrt{TS^3K^2H^5}+\sqrt{TSKH\RegSq}\right),
    \nonumber
\end{align*}
where the last equality is by picking $\gamma$, $N_0$, and $\epsilon$ optimally. Specifically, we pick these parameters as follows: $\gamma=\min\left\{\sqrt{\frac{TSKH}{\RegSq}}, \sqrt{\frac{T}{SKH^3}},T^{\frac{1}{4}}S^{\frac{1}{4}}K^{\frac{1}{2}}L^{-\frac{1}{2}}H^{\frac{3}{4}}\right\}$, $N_0 =\max\left\{\frac{L\sqrt{TKH}}{S},\gamma^{\frac{2}{3}}T^{\frac{1}{3}}S^{-\frac{2}{3}}K^{\frac{2}{3}}L^{\frac{4}{3}}\right\}$, $\eps = \max\left\{\sqrt{\frac{N_0}{T}},T^{-\frac{1}{3}}S^{\frac{1}{3}}K^{\frac{1}{3}}H^{\frac{1}{3}}\right\}$.

\section{Omitted Details in \pref{sec: experiment}}\label{app: exp}

\subsection{Implementation Details in \pref{sec: synthe}}\label{app: synthe}
In this subsection, we present the omitted implementation details for experiments in \pref{sec: synthe}.
Our implementation follows the two-stage pipeline detailed in \pref{sec: decoder} and \pref{sec: regret}.
In the {decoder-learning phase}, we first run \pref{alg:homing} for each candidate terminal state $s \in \calS_3$ with $\delta=0.05$ and $N=5000$ to obtain the homing policies $\wh{\pi}_s$.
We then execute \pref{alg:tuple_collection} with  $N_0=5000$ to collect a dataset of context--state--action--signal tuples.
To approximate the posterior distribution $\wh{h}$, we implement the function class $\calH$ by parameterizing both the decoder class $\Phi$ and the reward function class $\calF$ as two-layer fully-connected neural networks.
We then solve the ERM problem by training a separate classifier for each terminal state using the collected tuples.
Finally, we convert the learned probabilities into a reward estimator using the Lipschitz construction in \pref{eqn:reward_decoder}.

In the {policy-learning phase}, we apply \pref{alg:online_mdp} with a learning rate schedule $\gamma_t = H\sqrt{|\calA|t}$ over a total horizon of $T=40,000$ episodes. The regression oracle is implemented by online gradient descent with learning rate $0.05$.
Crucially, the learner updates its policy solely based on the decoded rewards; ground-truth rewards are used exclusively for performance evaluation and plotting.

\subsection{Dataset Generation Details in~\pref{sec: real}}\label{app: real_data}
In this subsection, we present the omitted data generation details in~\pref{sec: real}. Each episode involves a two-turn dialogue between the agent and the user, corresponding to a \textbf{horizon} $H=2$. In the first turn, the agent responds to a user request by asking a clarification question. The ground-truth question (taken from the dataset) is the most informative, containing all necessary details to complete the booking (e.g., cuisine type and party size for restaurants; genre and location for movies). Conversely, we synthesize two incorrect candidate questions using an LLM. While contextually relevant, these questions fail to include sufficient information to fulfill the request.

In the second turn, the agent finalizes the booking based on the information gathered. Alongside the correct booking action from the dataset, we used an LLM to generate four incorrect booking options, each containing details that deviate from the ground truth. After the agent provides the booking information, we use an LLM to simulate user feedback, indicating whether the user is satisfied or unsatisfied based on the correctness and sufficiency of the information. We provide the specific prompts used for data generation as follows.

\paragraph{Clarification Questions.}
We present the prompt used to generate incorrect clarification questions, which correspond to the \textbf{intermediate actions} $a_h$.
The ``\texttt{initial\_request}'' represents the input from the booking dataset, serving as the \textbf{context} $x$ in our formulation.

\begin{promptbox}
You are helping design clarification questions for a movie ticket booking assistant. User's initial request: \{initial\_request\}. Generate one question in English that is about movie booking and sounds natural, but does NOT help the assistant gather the key information needed to complete the booking correctly. For example, ask about irrelevant details like movie reviews, snack preferences, parking, or general theater facilities instead of essential booking details (movie, theater, time, date, tickets). Keep it short and specific to the movie booking context. Return ONLY the question text.
\end{promptbox}
\begin{promptbox}
You are helping design clarification questions for a restaurant reservation assistant. User's initial request: \{initial\_request\}. Generate one question in English that is about dining/reservations and sounds natural, but does NOT help the assistant gather the key information needed to answer the user correctly. Keep it short and specific to the restaurant context. Return ONLY the question text.
\end{promptbox}
\paragraph{User Responses.}
Below, we present the prompt used to generate the user's response to the clarification question. The generated response, concatenated with the dialogue history, serves as the \textbf{state} $s_h$.

\begin{promptbox}
User's initial request: \{initial\_request\}. System asks: \{clarification\_question\}. Simulate how the user would answer this question in English. Return ONLY the user's brief answer (1 sentence), without any explanation or meta-text.
\end{promptbox}
\paragraph{Booking Actions.} We present the prompt used to generate incorrect booking actions below.
The ``\texttt{correct\_answer}'', corresponding to the \textbf{final action} $a_H$, is taken directly from the booking dataset.

\begin{promptbox}
User request: \{initial\_request\}. User's answer to clarification: \{user\_answer\}. Correct system response: \{correct\_answer\}. Generate one DIFFERENT but related system response for movie booking. Change ONE key detail from the correct answer: either movie name (use a different movie), OR theater name (use a different theater), OR showtime (use a different time), OR date (use a different date), OR number of tickets. Keep the same professional tone and format. Return ONLY the system's response without any quotes or explanations.
\end{promptbox}
\begin{promptbox}
User request: \{initial\_request\}. User's answer to clarification: {user\_answer}. Correct system response: \{correct\_answer\}. Generate one different but related system response. Change ONE aspect from the correct answer: either cuisine type (try French, Mexican, Spanish, Thai, Greek, Indian, Korean - use DIVERSE types), OR location (different city), OR number of people, OR price range. Keep the EXACT same format with spaces, not underscores. Return ONLY the API call without any quotes or explanations.
\end{promptbox}
\paragraph{User Feedback.}
We present the prompts used for generating the user's positive and negative feedback, respectively. The \textbf{feedback} corresponds to $y$ in our formulation.
\begin{promptbox}
The user is SATISFIED with this conversation because the system asked the right question and provided the correct information. Generate ONLY the user's direct positive feedback (1-2 sentences). Do NOT include any meta-text like 'Here is', 'Sure', or 'The user says'. Start directly with the user's words.
\end{promptbox}

\begin{promptbox}
The user is NOT SATISFIED with this conversation because the system's response was wrong or irrelevant. Generate ONLY the user's direct negative feedback expressing dissatisfaction (1-2 sentences). Do NOT include any meta-text like 'Here is', 'Sure', or 'The user says'. Start directly with the user's words.
\end{promptbox}

All synthetic generations were performed using Qwen2.5-7B-Instruct~\citep{team2024qwen2}.

\subsection{Implementation Details in \pref{sec: real}}\label{app: imple_real}
In this subsection, we present the omitted implementation details in~\pref{sec: real}. While~\pref{alg:igl_mdp} constructs homing policies for each latent state, this is computationally infeasible in our experimental setting, where the state corresponds to the entire dialogue history. Instead, we adopt a simplified exploration strategy:
for the first 500 user requests, we randomly select clarification questions and final actions to collect data, and then train the inverse kinematics predictor $\wh{h}$ on these samples.
The predictor is initialized with Qwen2.5-7B-Instruct~\citep{team2024qwen2} and fine-tuned with a learning rate of $10^{-4}$ and a batch size of 16.
We then learn the policy on the remaining 3,200 samples.
The policy model is initialized with Qwen2.5-3B-Instruct~\citep{team2024qwen2} and employs inverse gap weighting~\citep{foster2020beyond} with $\gamma_t=\sqrt{|\calA|t}$ for action selection. We use the REINFORCE algorithm~\citep{williams1992simple} to update it with a learning rate of $5 \times 10^{-5}$ based on the predicted rewards from the inverse kinematics. Both models utilize the AdamW optimizer~\citep{loshchilov2017decoupled} for parameter updates. We then present the specific prompts used for model training as follows.
\paragraph{Prompt Used in Learning the Inverse Kinematics Model.} The prompt we use in learning the inverse kinematic model is as follows. The final answer corresponds to $a_H$ and the user feedback corresponds to $y$ in our formulation.
\begin{promptbox}
Evaluate if the User's Final Feedback is consistent with the given conversation. User Initial Request: \{initial\_request\}. System Question: \{question\}. User Answer: \{user\_answer\}. System Final Action: \{final\_answer\}. User Final Feedback: \{feedback\}. Is the feedback consistent with this final action? (Yes/No)
\end{promptbox}
\paragraph{Prompt Used in Learning the Policy.} The prompt we use in learning the policy is as follows.
\begin{promptbox}
You are a decision making agent. Based on the conversation, decide whether to take the following final action. Respond with Yes or No only. User Initial Request: \{initial\_request\}. System Clarification Question: \{question\}. User Clarification Answer: \{user\_answer\}. Candidate Final Action: \{action\}.
\end{promptbox}

\end{document}